\documentclass{article}
\pdfoutput=1

\usepackage[utf8]{inputenc} 
\usepackage[T1]{fontenc}    
\PassOptionsToPackage{hyphens}{url} 
\usepackage[hyperfootnotes=false]{hyperref}
\usepackage{booktabs}       
\usepackage{amsfonts}       
\usepackage{nicefrac}       
\usepackage{microtype}      
\usepackage{xcolor}         
\usepackage{algorithm}
\usepackage{algpseudocode}
\usepackage{graphicx}
\usepackage{enumerate}
\usepackage{caption}
\usepackage{subcaption}
\usepackage{comment}
\usepackage[export]{adjustbox}
\usepackage{mathtools}
\usepackage{amsthm}
\usepackage{authblk}
\usepackage{fullpage}
\usepackage{multirow}
\usepackage{makecell}

\usepackage{enumitem}

\usepackage[round]{natbib}

\usepackage{tikz}
\usepackage{xcolor}
\usepackage{tcolorbox}

\usepackage{Definitions}
\usepackage{dsfont}

\newlength{\inlineheight}
\settoheight{\inlineheight}{\strut}

\title{Tokenization Multiplicity Leads to\\Arbitrary Price Variation in LLM-as-a-service}

\author{Ivi Chatzi$^1$}
\author{Nina Corvelo Benz$^2$}
\author{Stratis Tsirtsis$^3$}
\author{Manuel~Gomez-Rodriguez$^1$}
\affil{$^{1}$Max Planck Institute for Software Systems, Kaiserslautern, Germany \\
\{ichatzi, manuel\}@mpi-sws.org}

\affil{$^{2}$Max Planck Institute of Biochemistry, Martinsried, Germany \\ corvelo@biochem.mpg.de}

\affil{$^{3}$Hasso Plattner Institute, Potsdam, Germany \\ stratis.tsirtsis@hpi.de}

\date{}

\begin{document}
\maketitle

\begin{abstract}
Providers of LLM-as-a-service have predominantly adopted a simple pricing model: users pay a fixed price per token.
%
Consequently, one may think that the price two different users would pay for the same output string under the same input prompt is the same. In our work, we show that, surprisingly, this is not (always) true.
%
We find empirical evidence that, particularly for non-english outputs, both proprietary and open-weights LLMs often generate the same (output) string with multiple different tokenizations, even under the same input prompt, and this in turn leads to arbitrary price variation.
%
To address the problem of tokenization multiplicity, we introduce canonical generation, a type of constrained generation that restricts LLMs to only generate canonical tokenizations---the unique tokenization in which each string is tokenized during the training process of an LLM.
%
Further, we introduce an efficient sampling algorithm for canonical generation based on the Gumbel-Max trick.
%
Experiments on a variety of natural language tasks demonstrate that canonical generation is comparable to standard generation in terms of performance and runtime, and it solves the problem of tokenization multiplicity.

\end{abstract}

\section{Introduction}
\label{sec:intro}
Imagine you run an online service that offers AI-powered translation to help tourists seamlessly navigate websites in languages other than their mother tongue.
Behind the scenes, for each website the users visit, your service sends the website's text to a provider of an LLM and asks their model to generate the translation in the mother tongue of the user.
One day, reviewing your costs, you notice something strange: Hans and Emma, two users of your service, visited the same page on a website with geographical fun facts, requesting a German translation of the sentence
\begin{quote}
\centering
\textit{``The Acari River is a river of Minas Gerais state in southeastern Brazil''}.
\end{quote}
Both of them received the same translated text down to the last character, yet, you were charged different amounts for each by the LLM provider.
How can identical results have different costs?

The computational resources required to operate state-of-the-art large language models are too large for most (enterprise) users to run them locally~\citep{narayanan2021efficient,ziheng2024scaling}.
As a consequence, users are relying on a growing market of cloud-based providers who offer access to such models via an application programming interface (API)~\citep{sun2022blackbox, lamalfa2024language,pais2022nlp,liagkou2024cost}.
%
%
%
The way that providers of LLM-as-a-service have come to charge the use of their models is heavily dependent on one of the most distinctive technical characteristics shared by most, if not all, state-of-the-art models---they process and generate information in discrete units called tokens.\footnote{Tokens are (sub-)words, symbols and numbers that make up sentences and paragraphs.}
In particular, since the computational resources required to process an input prompt and generate a response are directly proportional to the number of tokens involved~\citep{samsi2023fromwords,fernandez2025energy}, 
providers have predominantly adopted a straightforward pay-per-token pricing model---users pay a fixed price per token.

Under the pay-per-token pricing model, one may naturally assume that, if i) two users submit the same prompt to a provider, 
ii) the provider feeds the prompt into the same model,
and iii) both users receive exactly the same output string,
then, they will pay the same price.
In our work, we show that, surprisingly, this assumption does not hold.
We find empirical evidence that, particularly for non-english outputs, both proprietary and open-weights LLMs often generate the same (output) string with different tokenizations, even under the same input prompt, and this multiplicity of tokenizations in turn leads to price variation.
Moreover, since users derive value from the text that an output token sequence represents, rather than the token sequence itself, we argue that this price variation is arbitrary and undesirable.

To address the problem of tokenization multiplicity, 
we introduce canonical generation, a type of constrained generation that restricts LLMs to only generate canonical tokenizations---the unique tokenization in which each string is tokenized during the training process of an LLM~\citep{geh2024signal}.
%
In doing so, we also introduce an efficient sampling algorithm for canonical generation based on the Gumbel-Max trick that leverages the following theoretical result, which may be of independent interest: 
to generate a canonical tokenization, a model needs to generate (partial) canonical tokenizations at each step of the generation process underpinning its functioning.

In addition to solving the problem of tokenization multiplicity, we show that, in comparison with standard generation, the distribution of token sequences generated by canonical generation is provably closer to the true distribution of token sequences used during training and, in practice, the performance and runtime of canonical generation are comparable to standard generation.
We have released all code and data used in our experiments at: \href{https://github.com/Networks-Learning/Tokenization-Multiplicity}{https://github.com/Networks-Learning/Tokenization-Multiplicity}.

\xhdr{Further related work}
Our work builds upon further related work on tokenization in LLMs, multilingual LLMs, and the economics of LLM-as-a-service.

%
The study of tokenization has a rich history in natural language processing~\citep{palmer2000tokenisation,jm3}. 
%
More recently, in the context of LLMs, there has been a renewed interest in formalizing tokenization and analyzing its properties~\citep{gastaldi2024foundations, phan2024understanding, rajaraman2025theorytokenization}, with the Byte-Pair Encoding (BPE) tokenization algorithm in particular re\-cei\-ving increased attention~\citep{Berglund2023formalizing, zouhar2024formal,kozma2024theoretical}.
The impact of tokenization on LLMs has also been studied empirically~\citep{hou2023effects,athiwaratkun2024token} in generation tasks involving foreign languages~\citep{fujii2023japanese}, translation~\citep{domingo2019tokenizationtranslation}, arithmetic~\citep{singh2024tokenizationcounts}, mental health~\citep{liu2023task}, and privacy~\citep{kharitonov2021bpememorization,petrov2023unfairnesslanguages}, among others.
%
Moreover, there is a recent line of work studying the impact of non-canonical tokenizations on text perplexity calculations~\citep{cao-rimell2021evaluate,chirkova2023marginalize,geh2024signal,vieira2024language,giulianelli2024proper},
safety guidelines~\citep{geh2025adversarialtokenization},
downstream tasks~\citep{zheng2025brokentokens},
and image watermarking~\citep{jovanović2025watermarking},
as well as an effort to circumvent non-canonical tokenizations occuring as a consequence of unusual input prompt endings~\citep{guidance2023tokenhealing}.
Within this line of work, the work most closely related to ours is by~\citet{vieira2025canonical},
who have also independently proposed canonical generation\footnote{A preliminary version of our work and their work were posted on arXiv within three days of each other.}.
However, their work does not provide empirical evidence of tokenization multiplicity and its influence on the pay-per-token pricing model as we do, 
their sampling algorithms for canonical generation are computationally less efficient compared to ours, 
and their experimental evaluation does not demonstrate that canonical generation is comparable to standard generation in terms of performance across natural language tasks. 

There has been extensive research on the capabilities of multilingual LLMs, including their performance on downstream tasks~\citep{fujii2023japanese, zhang-etal-2022-robust, rust-etal-2021-good} as well as their safety vulnerabilities~\citep{shen-etal-2024-language, wang-etal-2024-languages, deng2024multilingualjailbreakchallengeslarge, dong2024evaluatingmitigatinglinguisticdiscrimination} across different languages.
Within this research, the work most related to ours is by~\citet{ahia-etal-2023-languages}, who have shown that multilingual LLMs require more tokens to generate text of similar meaning in minority languages than in english.
Our work complements their work by providing empirical evidence that multilingual LLMs suffer from tokenization multiplicity in minority languages.

The rapidly growing literature on the economics of LLM-as-a-service~\citep{mahmood2024pricing,laufer2024fine,cai2025are,saig2024incentivizing,bergemann2025economicslargelanguagemodels,
sun2025coincountinginvisiblereasoning, sun2025invisibletokensvisiblebills,artola2025is, velasco2025auditingpaypertokenlargelanguage} has predominantly focused on understanding the incentives providers may have to act strategically at the expense of users.
Within this literature, the work most closely related to ours is by~\citet{artola2025is}, who show that token multiplicity enables an unfaithful provider to strategize and misreport the tokenization of an output generated by a model they serve without raising suspicion.
Moreover, to eliminate the incentive to strategize, they propose an incentive-compatible pay-per-character pricing model, which in turn also eliminates the price variation due to tokenization multiplicity.
However, in their work, they do not empirically demonstrate that token multiplicity can occur in practice, and it can lead to arbitrary price variation even under faithful providers, as we do in our work.
Further, it is worth noting that the pay-per-character pricing model makes a provider's profit margin vary across tokens, and this may discourage its practical adoption.

\section{Preliminaries}
\label{sec:preliminaries}
In this section, we first define and formally characterize (deterministic) tokenizers and canonical tokenizations. 
Then, we briefly review the aspects of LLM training and generation that are relevant for our work.

\xhdr{Tokenizers and canonical tokenizations}
Tokenizers are tools that operate on sequences of characters (\ie, strings) and sequences of tokens, and can transform one type into the other.
Formally, let $\Sigma$ be a finite set of characters and $\Sigma^+$ be the set of all finite strings that can be created using the characters in $\Sigma$.
%
Similarly, let $V$ be a finite set of tokens, which we will refer to as the vocabulary, and $V^+$ be the set of all finite token sequences that can be created using the tokens in $V$.
Then, a tokenizer $\Tcal$ is characterized by a tuple $\Tcal \coloneq (\Sigma, V, \enc, \dec)$, where
$\enc:\Sigma^+ \rightarrow V^+$ is an encoder, which transforms strings to token sequences, and $\dec: V^+ \rightarrow \Sigma^+$ is a decoder, which transforms token sequences to strings.

Let $\sigmab$ be a string and $\sbb \in V^+$ be a token sequence such that $\dec(\sbb)=\sigmab$, then, we will say that $\sbb$ is a (valid) tokenization of the string $\sigmab$.
Here, note that there may be multiple tokenizations of a string $\sigmab$, that is, there may exist $\sbb, \sbb' \in V^+$ such that $\sbb \neq \sbb'$ and $\dec(\sbb)=\dec(\sbb')=\sigmab$.
However, given a string $\sigmab$, the encoder deterministically picks a single tokenization $\enc(\sigmab)$ among all tokenizations of $\sigmab$, which is often called the canonical tokenization~\citep{geh2024signal}.
For details on the most commonly used tokenization algorithms---BPE~\citep{gage1994bpe,sennrich2016neural}, Unigram~\citep{kudo2018subword} and Wordpiece~\citep{song2021fast}---refer to Appendix~\ref{app:tokenization-algs}.

\xhdr{LLM training and generation}
During training, an LLM learns to predict the next token in canonical sequences of tokens derived from raw text using a tokenizer. 
More formally, let $p_{\sbb}  = P[T \mid \Sbb = \sbb]$ denote the true distribution of the random variable $T\in V$, representing the next token given a (partial) token sequence $\sbb \in V^+$.
Then, the goal of LLM training is (typically) to minimize the (cross-entropy) loss between the model'{}s predicted distribution $d_{\sbb} \in \Delta(V)$ and the true next-token distribution $p_{\sbb}$.

During generation, an LLM takes as input a prompt sequence $\sbb_q \in V^+$ and responds with an output sequence $\sbb \in V^+$, generated using an autoregressive process. 
At each time step of the process, the LLM first takes as input the concatenation of the prompt sequence $\sbb_q$ and the (partial) output sequence $\sbb$, and generates a distribution over tokens $d_{\sbb_q \shortmid \sbb} \in \Delta(V)$. 
Then, it samples the next token $t$ from the distribution $d_{\sbb_q \shortmid \sbb}$ and appends the token $t$ to the output sequence $\sbb$.
The process continues until a special end-of-sequence token is sampled.
For the remainder of this paper, we will omit the prompt sequence $\sbb_q$ from the notation
and write $d_{\sbb}$ and $p_{\sbb}$ for brevity.

Importantly, since LLMs are trained on finite data, the support of the distribution $d_{\sbb}$ may differ from the respective true distribution $p_{\sbb}$.
As a result, it is possible for an LLM to generate a non-canonical token sequence, even if it has encountered no such sequences during training~\citep{cao-rimell2021evaluate, chirkova2023marginalize,vieira2024language,giulianelli2024proper,geh2025adversarialtokenization}.

\section{Can Hans and Emma Receive Different Tokenizations for the Same Output String?}
\label{sec:multiplicity}
We answer this question affirmatively. 
Using three simple natural language tasks, we demonstrate  that tokenization multiplicity can occur in both proprietary and open-weights LLMs.
Specifically, we focus on translation (as in the example of Hans and Emma in Section~\ref{sec:intro}), spell checking, and rephrasing.
The motivation for these choices is that, in all these tasks, an LLM has to process and rewrite a given input text and is therefore more likely to generate the same output string despite the randomness in its generation.
\begin{figure}[t]
    \centering
    \includegraphics[width=\linewidth]{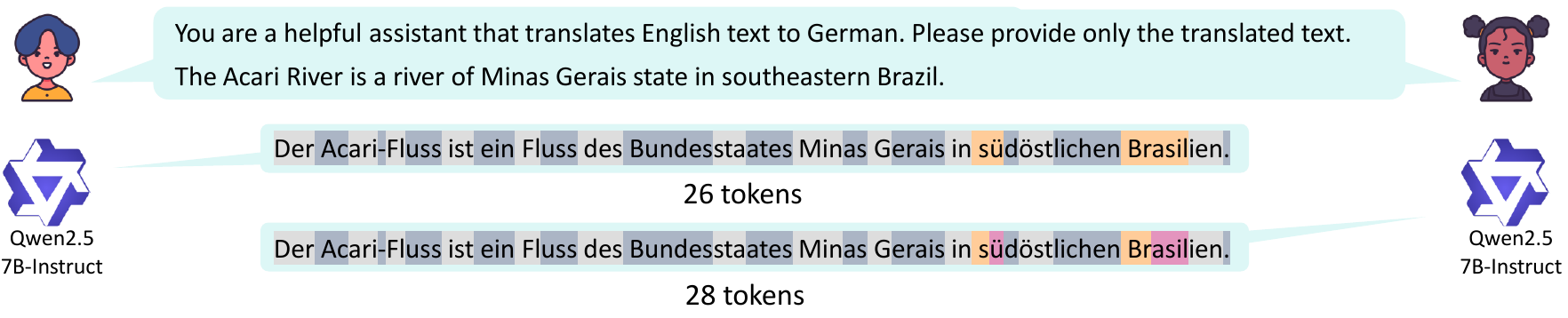}
    \caption{\textbf{Translation task example.} 
    The top box shows the input prompt, which consists of a translation instruction and the accompanying Wikipedia text to be translated.
    The latter two boxes show two outputs generated by \texttt{Qwen2.5-7B-Instruct} as response to the input prompt, corresponding to the same string but with two different tokenizations.
    }
    \label{fig:example-translate}
    \vspace{-3mm}
\end{figure}

For each of the three tasks, we construct $100$ input prompts using short texts from Wikipedia.\footnote{\href{https://dumps.wikimedia.org/}{https://dumps.wikimedia.org/}}
For the translation task, each prompt consists of a (system) instruction to translate a short text written in English to one of $5$ target languages, followed by the respective Wikipedia text in the English language.
For the other two tasks, each prompt consists of a (system) instruction to spell check or rephrase short texts written in $6$ different languages (including English) and, for the spell checking task, we first introduce a small number of typos in each Wikipedia text by 
replacing a few latin characters with other latin characters selected at random.
To simulate scenarios where different users ask an LLM to perform the same task on the same input text, we feed each input prompt to the LLM $100$ times, keeping all parameters identical except for the random seed used during 
generation.
We then focus on (the existence of) pairs of outputs whose strings are identical but their tokenization lengths differ.

As a starting point, in Figure~\ref{fig:example-translate}, we illustrate the translation task that we consider in our experiments using as an example a pair of outputs generated by $\texttt{Qwen2.5-7B-Instruct}$~\citep{yang2024qwen2} to the input text introduced at the beginning of Section~\ref{sec:intro}.
We observe that, while both output pairs correspond to the same string, Hans'{}s output consists of $26$ tokens while Emma'{}s output consists of $28$ tokens due to a difference in the generated tokenization of the words ``s{\"u}d{\"o}stlichen'' and ``Brasilien''.
Importantly, these tokenization differences cause the latter output to be $7.7\%$ more expensive than the former under pay-per-token pricing.
For similar examples illustrating the spell checking and rephrasing tasks, refer to Appendix~\ref{app:examples}.

Further, we proceed to quantify the frequency and magnitude of such inconsistencies in pricing across different tasks, LLMs, and languages.
In our experiments, we consider both proprietary LLMs from the \texttt{gpt}, \texttt{gemini}, and \texttt{claude} families and open-weights LLMs from the \texttt{Llama} and \texttt{Qwen} families.
For brevity, in the remainder of the section, we use shortened names to refer to each model, and we provide the full names of all models in Table~\ref{tab:models} in Appendix~\ref{app:exp-details}, along with additional details about the hardware, data, and APIs we use.\footnote{For (proprietary) \texttt{gpt}, \texttt{gemini}, and \texttt{claude} models, we used the official LLM-as-a-service APIs from OpenAI, Google, and Anthropic, respectively.
For open-weights models, we ran all experiments locally, however, note that a user without access to specialized hardware would normally also access them via LLM-as-a-service APIs from third-party providers.}

We first look into the (conditional) probability that, once two users receive the same output string from the model, the lengths of the two tokenizations differ.
Formally, let $\mathbf{S}$ and $\mathbf{S}'$ denote the random variables corresponding to tokenizations received by two different users as a response after providing the same prompt, \ie, $Q = Q'$.
Our goal is to estimate the quantity
$P(\texttt{len}(\mathbf{S}) \neq \texttt{len}(\mathbf{S}') \mid \texttt{dec}(\mathbf{S})=\texttt{dec}(\mathbf{S}'), Q = Q')$.
To this end, for each prompt, we count the number of output pairs whose strings match but tokenization lengths differ as a fraction of all output pairs whose strings match, and we take the average across prompts.
Note that, to perform fair comparisons across models, we intentionally focus on cases where there are differences in the tokenization length rather than the tokenization itself, since the generated tokenization is not observable under all APIs.\footnote{The API services for \texttt{gpt5m}, \texttt{gemini} and \texttt{claude} return only the output string without disclosing the tokenization that was generated. In those cases, it is still possible to identify differences in tokenization length, which is always disclosed since pricing directly depends on it.
}
Figure~\ref{fig:multiplicity-probability-de} summarizes the results for German language, which show that tokenization multiplicity 
can indeed occur both when using open-weights and proprietary models.
In particular, for all models except \texttt{gemini}, we find cases of tokenization multiplicity in at least one task and, for open-weights models (\ie, \texttt{Llama8b} and \texttt{Qwen7b}), we find that tokenization multiplicity occurs regularly across all three tasks.
Refer to Appendix~\ref{app:results-multiplicity} for qualitatively similar results in other languages.
In this context, note that the \texttt{gemini} model did generate a few ($<$$1$\%) non-canonical outputs, which suggests that this model may also present tokenization multiplicity, but we did not observe it due to the finite sample size of our experiments.\footnote{Refer to Appendix~\ref{app:results-multiplicity} for additional results showing the percentage of non-canonical outputs across all models and tasks.}

\begin{figure}[t]
  \centering
  \includegraphics[width=\linewidth]{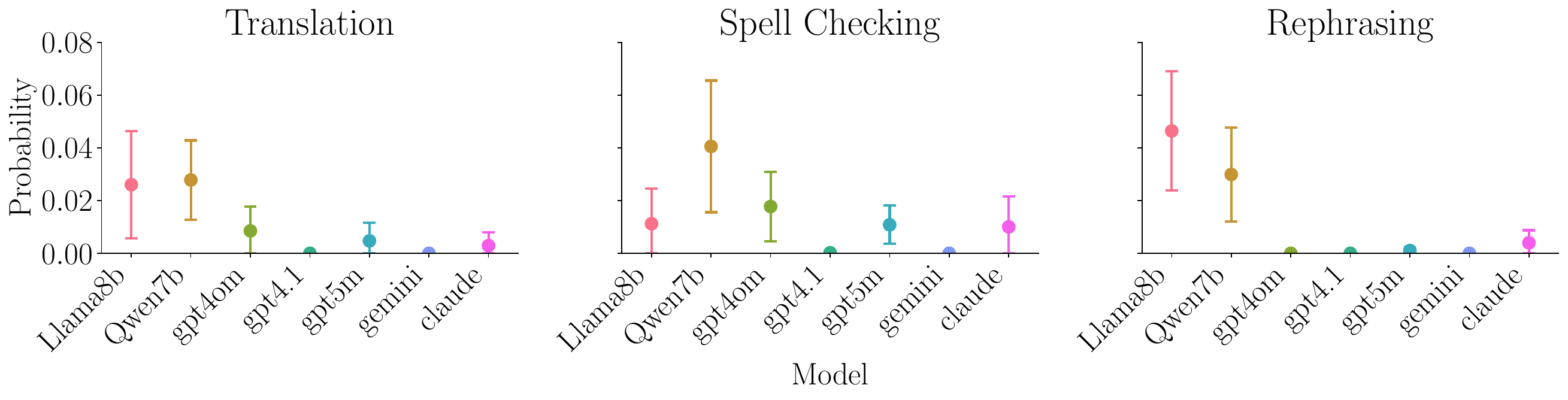}
  \caption{
  \textbf{Probability of tokenization multiplicity.}
  The plots show the empirical probability that the length of two output tokenizations to the same input prompt differ, conditioned on the output strings being the same.
  Each panel corresponds to one of the three tasks we consider in our experiments involving outputs in the German language.
  Across all panels, error bars represent $95\%$ confidence intervals resulting from $100$ input prompts.
  }
  \label{fig:multiplicity-probability-de}
\end{figure}

Next, we focus on outputs where tokenization multiplicity does occur and look into the magnitude of the variation in tokenization lengths, which directly determines the variation in prices charged to users under pay-per-token pricing.
Specifically, for each output string $\sigmab$ where there are at least two tokenizations with different length, we measure the relative difference in length (and hence price) between the shortest tokenization $\sbb_{min}$ and the longest tokenization $\sbb_{max}$, \ie, $\left(\texttt{len}\left(\sbb_{max}\right) - \texttt{len}\left(\sbb_{min}\right)\right)/\texttt{len}\left(\sbb_{min}\right)$.
 Figure~\ref{fig:rel-diff-de} summarizes the results for German language, which show that, whenever tokenization multiplicity occurs, a user may pay up to $15\%$ higher price in comparison with another user for the same output string, with similar levels of variation across both open-weights and proprietary models.
 Refer to Appendix~\ref{app:results-multiplicity} for qualitatively similar results in other languages.

Further, we analyze how the prevalence of tokenization multiplicity changes depending on the language of the output.
To this end, out of the $100$ input prompts we construct per language and task, we measure the number of prompts for which a model (here, \texttt{gpt5m}) generates at least two outputs with the same string but different tokenization lengths.
Figure~\ref{fig:conflicts-languages-gpt5} summarizes the results, which show that, across all three tasks, tokenization multiplicity is more prevalent in minority languages.
For example, even though ChatGPT officially supports languages such as Turkish or Swahili,\footnote{https://help.openai.com/en/articles/8357869} we observed that at least $7\%$ of the input prompts for these two languages led \texttt{gpt5m} to generate identical outputs with different prices.
For qualitatively similar results with other models, refer to Appendix~\ref{app:results-multiplicity}.

\begin{figure}[t]
  \centering
  \includegraphics[width=\linewidth]{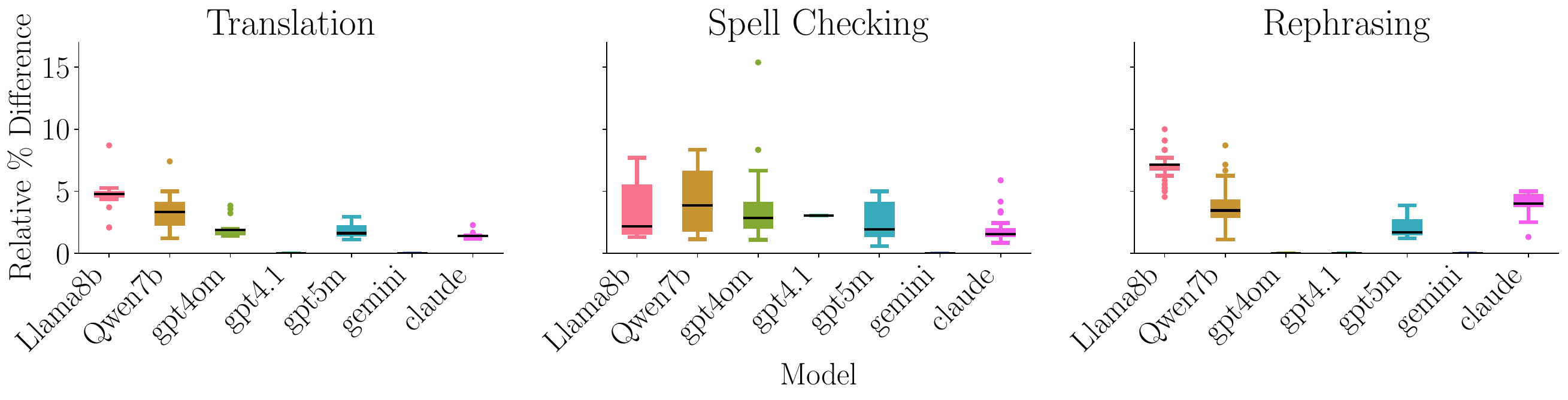}
  \caption{
  \textbf{Magnitude of price variation.}
  The plots show the empirical distribution of the relative difference in length between the longest and shortest tokenization of each output string, across all outputs where tokenization multiplicity occurs.
  Each panel corresponds to one of the three tasks we consider in our experiments involving outputs in the German language.
  Across all panels, box plots show the quartiles of the respective distributions with black horizontal lines representing median values.}
  \label{fig:rel-diff-de}
\end{figure}

To study the prevalence and magnitude of tokenization multiplicity, we have focused so far on pairs of short outputs corresponding to exactly the same string, since this allows us to attribute any price variation exclusively to token multiplicity.
In this context, one may think that, on pairs of longer outputs with multiple partial string matches, 
tokenization multiplicity may not lead to significant price variation if the difference in tokenization length across partial matches cancels out.
However, in what follows, we provide empirical evidence that the difference in tokenization length is not independent across partial matches and, as a consequence,
the price variation due to token multiplicity may be more pronounced in longer outputs.
In particular, in outputs generated by \texttt{gpt4om} to the translation task from English to German, but using longer Wikipedia texts (refer to Appendix~\ref{app:exp-details} for details regarding the experimental setup), we find that, if a word is generated with a non-canonical tokenization, then, subsequent occurrences of the word are typically generated using that same tokenization.
%
More specifically, we observe that, in $88\%$ of those cases, all subsequent occurrences (up to $10$) are generated using the same non-canonical tokenization.
%
In contrast, in $10\%$ of the remaining cases, all subsequent occurrences of the word are generated using the canonical tokenization and, in the last $2\%$ of cases, subsequent occurrences use a mix of both canonical and non-canonical tokenizations.
As an immediate consequence, it is easy to find examples of long, similar outputs containing multiple repetitions of the same word with different but consistent tokenizations, as shown in Figure~\ref{fig:example-translate-repeat} in Appendix~\ref{app:examples}.

\section{Avoiding Tokenization Multiplicity through Canonical Generation}
\label{sec:method}
To solve the problem of tokenization multiplicity, we introduce canonical generation, a type of constrained generation that restricts LLMs to generate each output string with only its canonical tokenization.
In this context, we argue that, if one were to pick a single tokenization for each output string, the canonical tokenization presents itself as the most natural choice.
This is because, during the training of an LLM, all strings in the training data are first encoded canonically using a tokenizer, and the LLM is then trained to complete these canonical token sequences.

In what follows, we first show that, for an output token sequence generated by an LLM to be canonical, the partial token sequences generated at each step of generation must also be canonical.
Building upon this result, we then introduce an efficient and easy-to-implement sampling algorithm for canonical generation based on the Gumbel-Max trick~\citep{huijben2022review}.
Finally, we conclude by analyzing both theoretically and empirically the quality of the outputs generated by canonical generation against those generated by standard generation, as well as the runtime of our sampling algorithm.

\subsection{Subsequences of Canonical Token Sequences Must Also Be Canonical}
\label{sec:irreversibility}

In this section, we establish our main theoretical result, which shows that the most commonly used tokenizers---BPE, Unigram, and Wordpiece---are \emph{non-recovering} tokenizers.
\begin{definition} \label{definition:non-recovering}
    A tokenizer $\Tcal=(\Sigma,V,\enc,\dec)$ is called non-recovering if it holds that, for any non-canonical token sequence $\sbb\in V^+$ according to $\Tcal$ and any token $t\in V$, $\sbb \shortmid t$ is also non-canonical.
\end{definition}
More formally, we have the following theorem:\footnote{The proof for Theorem~\ref{theorem:non-canonical} can be found in Appendix~\ref{app:proof-non-recovering}. In Appendix~\ref{app:pretokenization}, we also prove that, under mild conditions, the theorem holds when using pretokenization.}
\begin{theorem} \label{theorem:non-canonical}
    BPE-, Unigram- and Wordpiece-based tokenizers are non-recovering.
\end{theorem}

\begin{figure}[t]
  \centering
  \includegraphics[width=\linewidth]{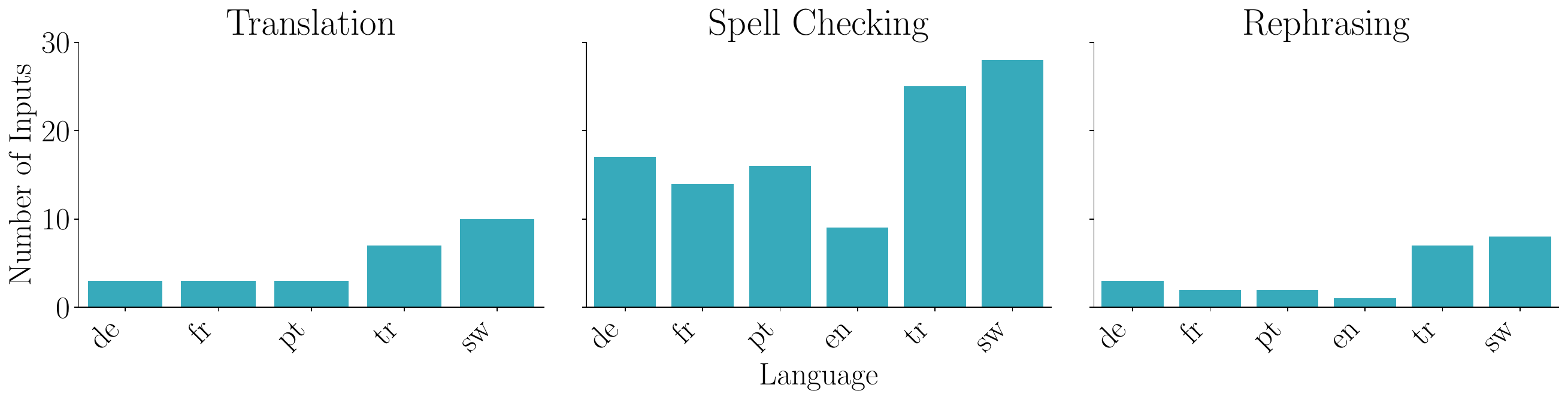}
  \caption{
  \textbf{Tokenization multiplicity across languages.}
  The plots show the number of inputs prompts for which we observe at least two outputs given by $\texttt{gpt5m}$ with the same string but different tokenization lengths.
  Each panel corresponds to one of the three tasks we consider in our experiments and pairs of letters on the x-axis correspond to different languages.
  Refer to Table~\ref{tab:languages} in Appendix~\ref{app:exp-details} for details regarding the languages we use and to Appendix~\ref{app:results-multiplicity} for qualitatively similar results using other models.
  }
  \label{fig:conflicts-languages-gpt5}
\end{figure}

This theorem immediately implies that an output token sequence is canonical if and only if the partial token sequences generated at each step of the generation process are canonical.
Moreover, this theorem also provides a plausible explanation for the empirical observation that the likelihood that an LLM generates non-canonical output sequences increases with the length of the sequence~\citep{geh2024signal}.
This is because, since sampling a ``non-canonical token'' once during the generation process is sufficient to render the output token sequence non-canonical, it is natural that the chances of this to happen increase with the number of sampled tokens.

In this context, we find it rather surprising that the above theorem holds for BPE, Unigram and Wordpiece since these (deterministic) tokenizers use fundamentally different tokenization techniques: BPE uses a rule based approach, Unigram maximizes the probability of the tokenization, and Wordpiece greedily encodes the text to minimize the number of tokens.

\subsection{An Efficient Sampling Algorithm for Canonical Generation}
\label{sec:sampling-algorithm}

Building upon Theorem~\ref{theorem:non-canonical}, we now introduce canonical generation, along with an efficient sampling algorithm to implement it.
The core principle of canonical generation is to ensure that the sampled tokens at all steps of the generation are such that the respective (partial) output remains canonical.
To this end, at each step, it sets the probability of a subset of tokens to zero---those that, when appended to a partial output sequence, would result in a non-canonical token sequence---and redistributes their probability mass to the remaining tokens proportionally to their original probability mass. 

Formally, let $d_{\sbb}$ denote the next-token distribution ge\-ne\-ra\-ted by the LLM given a partial output token sequence $\sbb$ and let $d_{\sbb}(t)$ denote the probability of sampling a token $t$ from this distribution.
Given the partial output token sequence $\sbb$, an LLM using canonical generation draws the next token in the generation process from a \emph{canonica\-lized} next-token distribution
\begin{equation} \label{eq:canonicalized-distr}
    \tilde{d}_{\sbb}(t):=\begin{cases}
        d_{\sbb}(t)/Z  & \text{if ${\sbb} \shortmid t $ is canonical} \\
        0 & \text{otherwise,}
    \end{cases}
\end{equation}
where $Z= \sum_{t \in V:\ {\sbb} \shortmid t \text{ is canonical}} d_{\sbb}(t)$ is a normalization constant that ensures that $\tilde{d}_\sbb$ is a valid probability distribution.
In that context, note that redistributing the probability mass of tokens that would lead to non-canonical token sequences proportionally to the original probabilities $d_\sbb(t)$ is a natural choice we make, inspired by other popular strategies for (stochastic) generation, such as top-$k$ and top-$p$ sampling~\citep{holtzman2019curious}, and constrained generation~\citep{beurerkellner2024guiding}.

Next, we introduce an efficient and easy-to-implement algorithm to sample from the canonicalized next-token distribution $\tilde{d}_{\sbb}$, which avoids explicitly computing the entire distribution $\tilde{d}_{\sbb}$.
The algorithm starts by sampling a value $u_t \sim \text{Gumbel}(0,1)$ from a Gumbel distribution for each token $t\in V$.
Then, it ranks the tokens in decreasing order with respect to the perturbed log-probability $\log(d_{\sbb}(t)) + u_t$. Finally, it returns the token $t$ with the largest value of $\log(d_{\sbb}(t)) + u_t$ such that ${\sbb} \shortmid t$ is canonical.
The overall procedure, summarized in Algorithm~\ref{alg:gumbel-max}, relies on a property of the Gumbel-Max trick~\citep{maddison2014sampling, huijben2022review}, which states that the $\argmax$ operation over a constrained subset of categorical outcomes is equivalent to sampling from a categorical distribution with zero probability for all outcomes outside the subset, and with the probabilities of the outcomes in the subset scaled proportionally to their original probabilities, as shown in Eq.~2 in~\citet{maddison2014sampling}.
Hence, it readily holds that Algorithm~\ref{alg:gumbel-max} returns a valid sample from the canonicalized next-token distribution $\tilde{d}_{\sbb}$ defined in Eq.~\ref{eq:canonicalized-distr}, \ie, 

\begin{equation*}
    \argmax_{t \in V:\ {\sbb} \shortmid t \text{ is canonical}} 
    \left\{ \log(d_{\sbb}(t)) + u_t \right\} \\ 
    \sim \tilde{d}_{\sbb}.
\end{equation*}

\begin{algorithm}[t]
\caption{Canonical Generation via Gumbel-Max Sampling}\label{alg:gumbel-max}
\begin{algorithmic}
    \Require next-token distribution $d_{\sbb}$
    \State $u_t \sim Gumbel(0,1)$ for all $t\in V$ 
    \For{$t\in V$ in decreasing order of $\log(d_{\sbb}(t)) + u_t$}
    \If{$s \shortmid t$ is canonical}
        \State \Return t
    \EndIf
    \EndFor
\end{algorithmic}
\end{algorithm}

Further, it is worth highlighting that, in contrast to computing the canonicalized next-token distribution $\tilde{d}_{\sbb}$, which requires evaluating the canonicity of $|V|$ token sequences, Algorithm~\ref{alg:gumbel-max} requires only a few evaluations of canonicity.
This is because, in practice, LLMs tend to generate mostly canonical token sequences~\citep{geh2024signal}, hence, the probabilities $d_{\sbb}(t)$ generated by an LLM for tokens $t$ that lead to non-canonical sequences ${\sbb} \shortmid t$ are usually small.
More specifically, let $d_{\sbb}(\text{canonical})$ be the probability mass on the subset of tokens that lead to canonical sequences, \ie, $d_{\sbb}(\text{canonical}) = \sum_{t \in V:\ {\sbb} \shortmid t \text{ is canonical}} d_{\sbb}(t)$, then Algorithm~\ref{alg:gumbel-max} requires, in expectation,
fewer than $1/d_{\sbb}(\text{canonical})$ evaluations of canonicity before successfully sampling the next token.
That is because, unlike (independent) rejection sampling from $d_{\sbb}$, which would require in expectation exactly $1/d_{\sbb}(\text{canonical})$ evaluations of canonicity until a token that leads to a canonical sequence is successfully sampled, our approach never checks the same token twice, which results in an increase in the success probability of sampling a token that leads to a canonical sequence after each failed attempt.\footnote{The number of evaluations of canonicity in rejection sampling is distributed according to a geometric distribution with success probability $d_{\sbb}(\text{canonical})$ resulting  in $1/d_{\sbb}(\text{canonical})$ evaluations in expectation until a successful sample. }

The simplest way to test whether the sequence $\sbb \shortmid t$ is canonical is to compute and check if $\enc(\dec(\sbb \shortmid t)) = \sbb \shortmid t$.
For BPE-based tokenizers however, it has been shown that it is sufficient to test if $\enc(\dec(t_{\text{last}} \shortmid t)) = t_{\text{last}} \shortmid t$, where $t_{\text{last}}$ is the final token in $\sbb$.
In fact, a recently proposed efficient algorithm to test whether $\sbb \shortmid t$ is canonical only partially applies the BPE algorithm to $\dec(t_{\text{last}} \shortmid t)$~\citep{vieira2025canonical, hayase2025sampling}.

\subsection{Performance of Canonical Generation}
\label{sec:kl-divergence}

\begin{table*}[t]
    \centering
    \begin{tabular}{l l c c c c}
    \toprule
    Task & Metric 
    & \multicolumn{2}{c}{Llama8B} 
    & \multicolumn{2}{c}{Qwen7B} \\
    \cmidrule(lr){3-4} \cmidrule(lr){5-6}
     &  & Standard & Canonical & Standard & Canonical \\
    \midrule
    \multirow{3}{*}{Translation}
        & Quality score
        & $0.72 \pm 0.02$ & $0.70 \pm 0.02$
        & $0.73 \pm 0.01$ & $0.71 \pm 0.01$ \\
        & Time per token (s)
        & $0.019$ & $0.020$ & $0.018$ & $0.019$ \\
        & Non-canonicity rate
        & $6\%$ & - & $18\%$ & - \\
    \midrule
    \multirow{3}{*}{Spell Checking}
        & $1$ $-$ edit distance
        & $0.62 \pm 0.04$ & $0.61 \pm 0.04$
        & $0.74 \pm 0.04$ & $0.72 \pm 0.04$ \\
        & Time per token (s)
        & $0.020$ & $0.023$ & $0.018$ & $0.018$ \\
        & Non-canonicity rate
        & $10\%$ & - & $19\%$ & - \\
    \midrule
    \multirow{3}{*}{Rephrasing}
        & Cosine similarity
        & $0.84 \pm 0.02$ & $0.84 \pm 0.02$ & $0.89 \pm 0.02$ & $0.88 \pm 0.02$ \\
        & Time per token (s)
        & $0.020$ & $0.020$ & $0.018$ & $0.020$ \\
        & Non-canonicity rate
        & $6\%$ & - & $5\%$ & - \\
    \midrule
    \multirow{3}{*}{MGSM}
        & Accuracy
        & $0.37 \pm 0.06$ & $0.37 \pm 0.06$ & $0.63 \pm 0.05$ & $0.62 \pm 0.06$ \\
        & Time per token (s)
        & $0.020$ & $0.021$ & $0.018$ & $0.020$ \\
        & Non-canonicity rate
        & $22\%$ & - & $29\%$ & - \\
    \bottomrule
    \end{tabular}
    \caption{{\bf Performance, (generation) time per token, and non-canonicity rate.}
    The results comprise pairs of outputs generated with standard and canonical generation in German language under the same source of randomness.
    For the time per token, confidence intervals are not shown, as they were all smaller than $10^{-4}$.
    }
    \label{tab:eval-de}
\end{table*}

We first show that, in comparison with standard generation, the distribution of tokens generated by canonical generation is provably closer to the true distribution of sequences that the LLM has seen during training.
Formally, let $p$ denote the true distribution 
over token sequences ${\sbb}\in V^+$ used during training, for which note that $p(\sbb)=0$ holds for all sequences $\sbb$ that are non-canonical.
Moreover, let $d$ denote the distribution over token sequences that the LLM generates using standard generation, and $\tilde{d}$ the distribution over token sequences that the LLM generates using canonical generation, that is, sampling from the canonicalized next-token distribution $\tilde{d}_{\sbb}$ given by Eq.~\ref{eq:canonicalized-distr} at each step of the generation process.
Then, the following theorem shows that $p$ is likely to be closer to $\tilde{d}$ than $d$ in terms of KL-divergence, a result independently established in Proposition 3 of~\citet{vieira2025canonical}:\footnote{The proof for Theorem~\ref{th:KL-divergence} can be found in Appendix~\ref{app:proof-kl-divergence}.}
\begin{theorem}\label{th:KL-divergence}
     Let $d$ be absolutely continuous\footnote{Absolute continuity is required for the KL-divergence to be well defined, \ie, we require that $d(\sbb)=0$ implies that $p(\sbb)=0$ for all $\sbb\in V^+$.} with respect to $p$.
     Moreover, assume that there exist $\sbb\in V^+$ and $t_1, t_2 \in V$ such that $\sbb \shortmid t_1$ is non-canonical with $d(\sbb\shortmid t_1)>0$ and $\sbb\shortmid t_2$ is canonical with $p(\sbb\shortmid t_2)>0$ and $d(\sbb\shortmid t_2)>0$.
     Then, it holds that
    \begin{equation}
        \textrm{KL}(p,\tilde{d}) < \textrm{KL}(p,d).
    \end{equation}
\end{theorem}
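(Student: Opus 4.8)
The plan is to decompose the KL-divergence over token sequences into a sum of per-step contributions using the chain rule, so that the comparison between $\textrm{KL}(p, \tilde d)$ and $\textrm{KL}(p, d)$ reduces to comparing the autoregressive conditionals $\tilde d_\sbb$ and $d_\sbb$ against the true conditional $p_\sbb$ at each prefix $\sbb$. Concretely, for any sequence distribution $q$ that factorizes autoregressively as $q(\sbb) = \prod_i q_{\sbb_{<i}}(t_i)$, we can write $\textrm{KL}(p, q) = \sum_{\sbb \in V^*} p(\sbb) \sum_i \log\frac{p_{\sbb_{<i}}(t_i)}{q_{\sbb_{<i}}(t_i)}$, and regrouping the sum by prefixes yields $\textrm{KL}(p,q) = \sum_{\sbb} P[\Sbb = \sbb] \, \textrm{KL}(p_\sbb, q_\sbb)$, where the outer sum ranges over prefixes reachable under $p$ and $P[\Sbb = \sbb]$ is the marginal probability that $\sbb$ is a prefix of the generated sequence under $p$. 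Since $p$ only places mass on canonical sequences, every prefix $\sbb$ appearing in this sum is itself canonical (by Theorem~\ref{theorem:non-canonical}, a prefix of a canonical sequence is canonical).

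The key step is then a prefix-wise comparison: I claim that for every canonical prefix $\sbb$, $\textrm{KL}(p_\sbb, \tilde d_\sbb) \le \textrm{KL}(p_\sbb, d_\sbb)$. To see this, fix such a prefix and recall $\tilde d_\sbb(t) = d_\sbb(t)/Z$ for $t$ with $\sbb \shortmid t$ canonical and $0$ otherwise, where $Z = d_\sbb(\text{canonical}) \le 1$. Because $p_\sbb$ is supported only on tokens $t$ with $\sbb \shortmid t$ canonical (again since $p$ lives on canonical sequences and $\sbb$ is canonical), we have
\begin{equation*}
\textrm{KL}(p_\sbb, \tilde d_\sbb) = \sum_{t:\ \sbb \shortmid t \text{ canonical}} p_\sbb(t) \log\frac{p_\sbb(t)}{d_\sbb(t)/Z} = \textrm{KL}(p_\sbb, d_\sbb) + \log Z \le \textrm{KL}(p_\sbb, d_\sbb),
\end{equation*}
with equality if and only if $Z = 1$, i.e.\ $d_\sbb$ already places no mass on non-canonical continuations. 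Summing the per-prefix inequality against the nonnegative weights $P[\Sbb = \sbb]$ gives $\textrm{KL}(p, \tilde d) \le \textrm{KL}(p, d)$.

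To get the strict inequality, I would use the hypothesis: there is a prefix $\sbb^\star$ and tokens $t_1$ (non-canonical continuation, $d(\sbb^\star \shortmid t_1) > 0$) and $t_2$ (canonical continuation, $p(\sbb^\star \shortmid t_2) > 0$ and $d(\sbb^\star \shortmid t_2) > 0$). The condition $d(\sbb^\star \shortmid t_1) > 0$ forces $d_{\sbb^\star}(t_1) > 0$ on a non-canonical continuation, hence $Z < 1$ at $\sbb^\star$, so the per-prefix inequality is strict there: $\textrm{KL}(p_{\sbb^\star}, \tilde d_{\sbb^\star}) < \textrm{KL}(p_{\sbb^\star}, d_{\sbb^\star})$. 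It then remains to argue that $\sbb^\star$ actually receives positive weight, i.e.\ $P[\Sbb = \sbb^\star] > 0$ under $p$ — this is where the conditions $p(\sbb^\star \shortmid t_2) > 0$ and (implicitly, via absolute continuity backward-checked) the canonicity of $\sbb^\star$ come in: $p(\sbb^\star \shortmid t_2) > 0$ implies $\sbb^\star \shortmid t_2$ is canonical, hence $\sbb^\star$ is canonical and is a genuine prefix of a sequence with positive $p$-mass, so $P[\Sbb = \sbb^\star] \ge p(\sbb^\star \shortmid t_2) > 0$. With one strictly-improved term carrying positive weight and all other terms weakly improved, the total is strictly smaller.

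The main obstacle I anticipate is making the prefix-decomposition bookkeeping fully rigorous — in particular handling the end-of-sequence token and variable-length sequences cleanly (treating termination as a distinguished token so that $V^*$-valued sequences correspond to paths in a tree), verifying that both $d$ and $\tilde d$ genuinely factorize autoregressively with $\tilde d$ built from the conditionals $\tilde d_\sbb$ (this is essentially by construction of canonical sampling, but needs to be stated), and confirming that the absolute-continuity hypothesis on $d$ w.r.t.\ $p$ transfers to $\tilde d$ w.r.t.\ $p$ so that $\textrm{KL}(p, \tilde d)$ is well-defined and finite wherever $\textrm{KL}(p,d)$ is. The inequality $\log Z \le 0$ itself is trivial; the real care is in the measure-theoretic/combinatorial setup of the chain-rule identity $\textrm{KL}(p,q) = \sum_\sbb P[\Sbb = \sbb]\,\textrm{KL}(p_\sbb, q_\sbb)$ and in ensuring the distinguished prefix $\sbb^\star$ lies in the support of the marginal induced by $p$.
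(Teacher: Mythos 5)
Your proof is correct in substance, but it follows a genuinely different route from the paper's. You decompose the KL divergence via the chain rule into a weighted sum of per-prefix KLs, $\textrm{KL}(p,q)=\sum_\sbb P[\Sbb=\sbb]\,\textrm{KL}(p_\sbb,q_\sbb)$, and then exploit the clean algebraic identity $\textrm{KL}(p_\sbb,\tilde d_\sbb)=\textrm{KL}(p_\sbb,d_\sbb)+\log Z_\sbb$, which isolates the improvement at each prefix as exactly $-\log Z_\sbb\ge 0$. The paper instead stays entirely at the sequence level: it writes $\textrm{KL}(p,d)-\textrm{KL}(p,\tilde d)=\sum_{\sbb\in V^*}p(\sbb)\log\bigl(\tilde d(\sbb)/d(\sbb)\bigr)$, argues that $\tilde d(\sbb)\ge d(\sbb)$ whenever $\tilde d(\sbb)>0$ (because $\tilde d(\sbb)$ is a product of per-step ratios $\tilde d_{\sbb_{<i}}(t_i)/d_{\sbb_{<i}}(t_i)\ge 1$), and then exhibits the strict gain at the specific canonical extension $\hat\sbb\shortmid t_2$, where $\tilde d(\hat\sbb\shortmid t_2)/d(\hat\sbb\shortmid t_2)>1$ follows from $Z<1$ at $\hat\sbb$. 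The two decompositions are connected by the very chain rule you invoke, but they distribute the work differently: your version surfaces the local structure $-\log Z_\sbb$ and makes the per-step monotonicity transparent, at the cost of the bookkeeping you rightly flag (termination token, prefix marginals, finiteness of the per-prefix KLs); the paper's version avoids introducing the chain-rule identity and instead relies on the elementary observation that products of ratios each $\ge 1$ are $\ge 1$. Both need Theorem~\ref{theorem:non-canonical} to ensure that $p_\sbb$ (equivalently, $p$) places no mass on non-canonical continuations, and both locate the strict inequality at the hypothesized $\sbb^\star$ (the paper's $\hat\sbb$). Your anticipation that $\textrm{KL}(p,\tilde d)\le\textrm{KL}(p,d)$ transfers finiteness is correct and is implicitly also what the paper relies on.
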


In simpler terms, the two conditions under which canonical generation brings the output token sequences closer to the true distribution are that
(i) there exist non-canonical token sequences with positive probability of being generated under $d$ so that their probability mass can be redistributed,
and (ii) there exist canonical token sequences with positive probability under $d$ and $p$ 
so that the redistribution of probability mass in $\tilde{d}$ is beneficial.
To understand the intuition behind Theorem~\ref{th:KL-divergence}, note that, by using canonical generation (\ie, sampling from $\tilde{d}$ instead of $d$),
the probability that an LLM generates non-canonical token sequences becomes zero, and the probability that it generates any other (canonical) token sequence increases
under $\tilde{d}$.
Further, since only canonical token sequences have positive probability under the true distribution $p$, this redistribution of probability mass from non-canonical token sequences to canonical ones can only bring the distribution $\tilde{d}$ closer to the true distribution $p$ compared to $d$.

On the flip side, it is important to clarify that a similar property does not necessarily hold for the respective distributions over strings. 
That is, using canonical generation, the distribution of output strings, resulting from decoding the output token sequences, is not guaranteed to be closer (in terms of KL-divergence) to the true distribution of output strings used during training.
Formally, let $p_\dec= P_{\sbb\sim p(\sbb)}[\dec(\sbb)]$ be the true distribution over strings, $d_{\dec}=P_{\sbb\sim d(\sbb)}[\dec(\sbb)]$ be the distribution of strings induced by the distribution of output token sequences $d$, and $\tilde{d}_\dec = P_{\sbb\sim \tilde{d}(\sbb)}[\dec(\sbb)]$ be the distribution of strings induced by the distribution of output token sequences $\tilde{d}$. Then, we have that 
\begin{align*}
 \textrm{KL}&(p_\dec,\tilde{d}_\dec) = 
 \\ & =\sum_{\sigmab \in \Sigma^+} p_\dec( \dec(\sbb)=\sigmab ) \ln\left(\frac{p_\dec( \dec(\sbb)=\sigmab)}{\tilde{d}_\dec( \dec(\sbb)=\sigmab )}\right)
 \\ & = \sum_{\enc(\sigmab), \sigmab \in \Sigma^+} p( \sbb=\enc(\sigmab) ) \ln\left(\frac{p( \sbb=\enc(\sigmab))}{\tilde{d}( \sbb=\enc(\sigmab))}\right)
 \\ & = \sum_{\sbb \in V^+:\, \sbb \text{ is canonical}} p( \sbb=\sbb ) \ln\left(\frac{p( \sbb=\sbb)}{\tilde{d}( \sbb=\sbb)}\right)
 \\ & =\textrm{KL}(p,\tilde{d})
\end{align*}
because there is a one-to-one mapping determined by the encoder $\enc$ from any string to a canonical token sequence, and only canonical token sequences have positive probability under $p$ and $\tilde{d}$.
In contrast, one cannot claim the same for $\textrm{KL}(p_\dec, d_\dec) $ and $\textrm{KL}(p, d)$, as the same string can have multiple tokenizations that have positive probability under $d$.
Thus, we cannot conclude that $\textrm{KL}(p_\dec,\tilde{d}_\dec) < \textrm{KL}(p_\dec,d_\dec)$.

Next, given this theoretical gap and since users typically derive value from the string that the output token sequence represents rather than the token sequence itself, we empirically compare the performance and efficiency of canonical and standard generation on the same three tasks from Section~\ref{sec:multiplicity} (\ie, translation, spell checking, and rephrasing), as well as a standard benchmark for multilingual LLM evaluation (\ie, MGSM), using (open-weights) \texttt{Llama} and \texttt{Qwen} models.\footnote{In our experiments, for standard generation, we sample from the next-token distribution $d_{\sbb}(t)$ using the default categorical sampler in \texttt{PyTorch}, which is an implementation of Gumbel-Max sampling.}
To this end, we first sample $100$ pairs of outputs per input prompt using standard and canonical generation with the same source of randomness, following~\citet{benz2025evaluationlargelanguagemodels}. 
Then, we identify the pairs in which the output generated using standard generation are non-canonical, 
which are the only ones in which standard and canonical generation differ (under Gumbel-Max based sampling),
and measure performance and time per token by (re-)sampling $10$ continuations from the token in which the output became non-canonical under standard generation using both standard and canonical generation on each corresponding output (again with the same source of randomness).

To measure performance, we use (i) a quality score provided by a pre-trained neural network for reference-free translation evaluation~\citep{guerreiro-etal-2024-xcomet} in the translation task, 
(ii) the (normalized) Levenshtein edit distance~\citep{levenshtein1966binary} between the generated text and the original text without typos in the spell checking task, 
(iii) cosine similarity of sentence embeddings~\citep{reimers-gurevych-2019-sentence} of the original and rephrased text in the rephrasing task, 
and (iv) average accuracy in the MGSM task.
Refer to Appendix~\ref{app:exp-details} for additional details regarding the experimental setup.

Table~\ref{tab:eval-de} summarizes the results for the German language, which show that both canonical and standard generation are comparable both in terms of performance and efficiency.
Here, the slightly lower performance of canonical generation can be attributed to a limitation shared by constrained generation in general, namely, occasionally restricting the sampling space to low probability generation paths~\citep{vieira2025canonical}. 
Refer to Appendix~\ref{app:results-canonical} for qualitatively similar results in other languages.

\section{Discussion and Future Work}
\label{sec:discussion}
In this section, we highlight several limitations of our work and discuss avenues for future research.

\xhdr{Tasks and languages}
Our experiments provides strong empirical evidence that tokenization multiplicity can occur on three natural language tasks, particularly in non-english languages.
However, it would be interesting to study tokenization multiplicity on additional tasks.
Moreover, 
it would be interesting to investigate whether commonly used practices to improve multilingual language generation, such as fine-tuning on different languages, using a different tokenizer per language, or using specialized models trained on mostly non-English text, may reduce the prevalence of tokenization multiplicity.

\xhdr{Methodology}
Our main theoretical result (Theorem~\ref{theorem:non-canonical}) reveals that, for BPE-, Unigram- and Wordpiece-based tokenizers, subsequences of canonical token sequences must also be canonical. 
%
It would be very interesting to better understand what property a tokenizer needs to satisfy for our result to hold. 
In this context, it would also be interesting to define relaxed notions of canonical tokenization applicable to stochastic tokenizers~\citep{kudo2018subword, provilkov2020bpe}, and adapt our theoretical result to this type of tokenizers.

Further, under canonical generation, we canonicalize the next-token distribution by redistributing the probability mass of tokens leading to non-canonical token sequences among the remaining tokens proportionally to their original probability mass.
We have shown that, in comparison with the original next-token distribution, this particular canonicalized next-token distribution leads to a distribution of output sequences that is closer to the true distribution of token sequences. 
However, we have found that, in practice, canonical generation has slightly lower performance than standard generation.
%
In future work, it would be worth to investigate global strategies beyond (next-token) sampling to redistribute the probability mass of non-canonical output token sequences to achieve better practical performance.

\section{Conclusions}
\label{sec:conclusions}
We have presented empirical evidence that, particularly for non-english outputs, both proprietary and open-weights LLMs often generate the same (output) string with different tokenizations, even under the same input prompt, and this multiplicity of tokenizations in turn leads to arbitrary, undesirable price variation.
To address the problem of tokenization multiplicity, we have proposed canonical generation, a type of constrained generation that restricts LLMs to only generate the canonical tokenization of any output string,
and introduced a simple and efficient sampling algorithm based on the Gumbel-Max trick to implement it.
Further, we have shown that, in comparison with standard generation, the distribution of token sequences generated using canonical generation is provably closer to the true distribution of token sequences used during training,
and the performance and runtime of LLMs using either method are comparable.


\vspace{2mm}
\xhdr{Acknowledgements} 
Gomez-Rodriguez acknowledges support from the European Research Council (ERC) under the European Union'{}s Horizon 2020 research and innovation programme (grant agreement No. 945719).
Tsirtsis acknowledges support from the Alexander von Humboldt Foundation in the framework of the Alexander von Humboldt Professorship (Humboldt Professor of Technology and Regulation awarded to Sandra Wachter) endowed by the Federal Ministry of Education and Research via the Hasso Plattner Institute.

\clearpage

{ 
\small
\bibliographystyle{unsrtnat}
\bibliography{ref}
}

\clearpage
\newpage

\appendix

\section{Tokenization Algorithms}
\vspace{-2mm}
\label{app:tokenization-algs}
There exists many tokenization algorithms to construct the set of tokens $V$, the encoder $\enc$, and the decoder $\dec$ characterizing a tokenizer $\Tcal$.
In the following, we review three popular tokenization algorithms, BPE~\citep{gage1994bpe,sennrich2016neural}, Unigram~\citep{kudo2018subword} and Wordpiece~\citep{song2021fast}. We also discuss pretokenization, a preprocessing technique used to partition larger bodies of text before tokenization.

\vspace{-2mm}
\subsection{The BPE tokenization algorithm}
\label{app:bpe}

The BPE tokenization algorithm~\citep{gage1994bpe,sennrich2016neural} is used by most, if not all, state-of-the-art LLMs.
In a nutshell, the BPE algorithm aims to create a tokenizer $\Tcal$ with a set of tokens $V$ corresponding to character sequences that appear frequently in a training set of strings $\Ccal$.
To this end, it proceeds as follows.

In an initialization phase, the algorithm sets i) $\Sigma$ to be the set of all characters that appear at least once in $\Ccal$,
ii)
$V$ to be the set of single-character tokens, that is, for each $c \in \Sigma$, there exists one and only one $t \in V$ such that $\dec(t) = c$,
and iii)
$\Scal$ to be the set of single-character token sequences $\sbb \in V^{+}$ representing all strings in $\Ccal$.
After the initialization phase, the algorithm proceeds iteratively for a predetermined number of iterations.
At each iteration, it looks for the pair of tokens $t, t' \in V$ whose concatenation $t \shortmid t'$ appears most frequently in the set of token sequences $\Scal$,
it creates a new token $t \circ t'$, where the symbol $\circ$ denotes the merge operation and $\dec(t \circ t') = \dec(t) \shortmid \dec(t')$, 
and it adds the newly created token to $V$.
Then, for each token sequence $\sbb \in \Scal$, it replaces all occurrences of $t \shortmid t'$ by $t \circ t'$ one by one.
Lastly, it creates a merge rule $r_{t,t'}$, which specifies the replacement of $t \shortmid t'$ with $t \circ t'$, and adds it to an ordered sequence of merge rules $\Rcal$. 

After termination, the algorithm defines the encoder $\enc$ and decoder $\dec$ as follows. 
For any given token sequence $\sbb \in V^{+}$, $\dec(\sbb)$ transforms the sequence to a string one token at a time, in order, using the token definitions.
For any given string $\sigmab \in \Sigma^{+}$, $\enc(\sigmab)$ first transforms the string to a sequence of single-character tokens.
Then, it merges consecutive tokens from this sequence following the merge rules from $\Rcal$, in order, until no merge rule is applicable, and it returns the resulting sequence---the canonical sequence.\footnote{If $t \shortmid t'$ appears multiple times in a token sequence, the merge rule $r_{t, t'}$ is applied in order of appearance in the sequence.}

\vspace{-2mm}
\subsection{The Wordpiece tokenization algorithm}
The Wordpiece algorithm is similar to BPE, in the sense that it builds the token vocabulary by iteratively merging tokens.
However, the initialization phase, the merging criterion and the encoding function differ.

In the initialization phase, $\Sigma$ is set to contain all characters that appear at least once in the training set of strings $\Ccal$.
Then,
for each character $c \in \Sigma$ that appears at least once
in $\Ccal$, a single-character token $t$ is added to $V$ such that $\dec (t) = c$,
and $\Scal$ is initialized as a set that contains all single-character token sequences $\sbb \in V^+$ that represent all strings in $\Ccal$.
Interestingly, Wordpiece transforms characters (and substrings) inside words differently than characters (and substrings) at the beginning of words.
Specifically, tokens representing characters (and substrings) inside words have a special prefix.

To build the vocabulary, Wordpiece proceeds iteratively by merging existing tokens and adding them to $V$ until it reaches a predetermined size, similarly to BPE.
However, the criterion to select which pair of tokens to merge is different.
If $freq(\sbb')$ denotes the number of times that sequence $\sbb' \in V^+$ appears (as a subsequence) in the set of sequences $\Scal$,
Wordpiece looks for the pair of tokens $t, t' \in V$ that maximizes the value of $\frac{freq(t \shortmid t')}{freq(t) \cdot freq(t')}$.
Then, a new token $t \circ t'$ is added to $V$, where $\dec (t \circ t') = \dec(t) \shortmid \dec(t')$, and all occurences of $t \shortmid t'$ in each token sequence $\sbb \in \Scal$ are replaced by $t \circ t'$.
With this criterion, Wordpiece prefers to merge tokens whose concatenation appears commonly in $\Scal$, but they are not common individually.

After the above iterative process terminates, the algorithm defines the encoder and decoder functions as follows.
For any token sequence $\sbb = t_1 \shortmid \dots \shortmid t_n \in V^+$ with $n \in \NN$, the decoder returns $\dec(\sbb)=\dec(t_1) \shortmid \dots \shortmid \dec(t_n)$ using the token definitions.
Any string $\sigmab = c_1 \shortmid \dots \shortmid c_m \in \Sigma^+$, $m \in \NN$ given to the encoder is tokenized greedily from left to right, each time selecting the token in the vocabulary that represents the most characters starting from the beginning of the string.
Specifically, the first token in $\enc(\sigmab)$ is the token $t \in V$ such that $\dec(t)= c_1 \shortmid \dots \shortmid c_i$, with $i \leq m$, and $\nexists t' \in V$ such that $\dec(t')=c_1 \shortmid \dots \shortmid c_j$ with $i<j\leq m$.
In the above selection, if $c_1$ is inside a word, then $t$ must contain the special prefix.
This process continues in the same manner with the remaining string $c_{i+1}\shortmid \dots \shortmid c_m$.

\subsection{The Unigram tokenization algorithm}
The Unigram algorithm aims to create a tokenizer $\Tcal$ with a set of tokens $V$ in order to minimize a loss when tokenizing a training set of strings $\Ccal$.
In the initialization phase, $\Sigma$ is set to contain all the characters that appear at least once in $\Ccal$.
Unlike the BPE algorithm, which iteratively adds tokens to the vocabulary $V$, Unigram starts with a large vocabulary and removes tokens from it until it reaches a predetermined size.
This initial large vocabulary can be set in multiple ways, such as applying the BPE algorithm on $\Ccal$ with many iterations, or initializing it with tokens that decode to the most frequently occuring substrings in $\Ccal$.

After the initial vocabulary has been set, the algorithm proceeds in iterations, each time computing a loss over the strings in $\Ccal$ and the current vocabulary, and removing a batch of tokens from the vocabulary (typically $10\%$ or $20\%$ of tokens) whose removal minimizes
this loss.
In each iteration, every token $t$ in the current vocabulary $V$ is assigned a probability score $r(t)=\frac{freq(t)}{\sum_{t'\in V} freq(t')}$, where $freq(t)$ denotes the number of times that the token $t$ appears in all possible tokenizations of the strings in $\Ccal$.
For each token $t \in V$, the loss over the training set is computed as $\sum_{\sigmab \in \Ccal} - \log(r_{V \setminus \{t\}}(\sigmab))$,
where $r_V(\sigmab)=\max_{\sbb \in V^+, \dec(\sbb)=\sigmab}r(\sbb)$ denotes the probability score of the most likely tokenization of $\sigmab$ under vocabulary $V$, and the probability score of tokenization $\sbb=t_1 \shortmid \dots \shortmid t_n$, with $n \in \NN$, is simply $r(\sbb)=r(t_1) \dots r(t_n)$.
The tokens that minimize this loss are removed from the vocabulary and the process repeats until the vocabulary reaches a predetermined size.

After the vocabulary has been finalized, the encoder is set to tokenize a string $\sigmab \in \Sigma^+$ by finding its most likely tokenization under the final vocabulary $V$, \ie, $\enc(\sigmab)=\arg \max_{\sbb \in V^+, \dec(\sbb)=\sigmab}r(\sbb)$, using the Viterbi algorithm~\citep{viterbi1967error},
and the decoder decodes all tokens in $V$ the same way as in the original, large vocabulary.
\clearpage

\section{Proof of Theorem~\ref{theorem:non-canonical}}
\label{app:proof-non-recovering}
In this section, we prove Theorem~\ref{theorem:non-canonical} by showing individually that each tokenization algorithm---BPE, Wordpiece, and Unigram---builds tokenizers which are non-recovering.

\subsection{BPE-based tokenizers are non-recovering}
In order to show that BPE-based tokenizers are non-recovering, we define some additional notation regarding the BPE tokenization algorithm.

When tokenizing a string $\sigmab=c_1 \shortmid \dots \shortmid c_n$, with $c_i \in \Sigma$ and $n \in \NN$, according to the BPE algorithm, we use the term \textit{merge} and write $m=(r_{t,t'},i,j)$ to refer to a single application of merge rule $r_{t,t'} \in \Rcal$ on two consecutive tokens $t \shortmid t'$ that correspond to the substring of characters $c_i \shortmid \dots \shortmid c_j$ in $\sigmab$.
To tokenize $\sigmab$, merges are performed following a unique merge sequence $M=(m_1,\dots, m_{|M|})$, where the merges are ordered $m_1 \prec \dots \prec m_{|M|}$, first by the order in which the merge rule they refer to appears in $\Rcal$, and second by position of merged token pairs in the sequence.
The notation $m \prec m'$, for $m=(r,i,j), m'=(r',i',j')$ with $r,r'\in\Rcal, \; i,j,i',j' \in [n ]$, means that either $r$ appears before $r'$ in $\Rcal$, or $r=r'$ and $i<i'$.

We now define an operator that, applied to a merge sequence $M$ that tokenizes the string ${\sigmab}$, specifies the subsequence of merges that are applied to a certain substring of $\sigmab$.
Further, we define \emph{shift equivalence}, referring to merge sequences whose merges correspond to the exact same merge rule sequence applied to different positions in a string (shifted by a constant).

\begin{definition} \label{def:merge-subseq}
    Let $\sbb=t_1 \shortmid \dots \shortmid t_{|\sbb|} \in V^{+}$ be a tokenization of $\sigmab=c_1 \shortmid \dots \shortmid c_{|\sigmab|} \in \Sigma^{+}$ obtained by applying merge sequence $M=(m_1, \dots, m_n)$. 
    For any continuous token subsequence $\sbb' $ of $\sbb$ spanning the substring $\sigmab'= c_u \shortmid \dots \shortmid c_v$, $1\leq u<v\leq|\sigmab|$,
    the operator $[M]_{\sbb'}$ denotes the subsequence of merges in $M$
    such that
    $m=(r,i,j) \in [M]_{\sbb'}$ if $m \in M$ and $u \leq i < j \leq v$.
\end{definition}

\begin{definition}\label{def:equivalence_class}
    Two merge sequences $M=(m_1, \dots, m_{|M|})$, $M'=(m'_1, \dots, m'_{|M'|})$ are \emph{shift equivalent}, denoted by  $M \shifteq M'$, if $|M|=|M'|$ and there exists $n \in \mathbb{Z}$ such that for all $i \in \{1,\dots, |M|\}$ with $m_i=(r,j,k)$, $r\in \Rcal$, $k>j>0$, it holds that $m'_i=(r,j+n,k+n)$.
\end{definition}

Before we prove that BPE-based tokenizers are non-recovering, we show that the merge sequence that creates the tokenization $\sbb=\sbb_1 \shortmid \dots \shortmid \sbb_n$ from a string $\sigmab$,
can be partitioned into $n$ disjoint (non-continuous) subsequences of merges, that create the tokenizations $\sbb_1, \dots, \sbb_n$ from the corresponding substring of $\sigmab$.

\begin{lemma} \label{lemma:merge-subseq}
    Let $\sbb \in V^{+}$ be a tokenization of $\sigmab=c_1 \shortmid \dots \shortmid c_{|\sigmab|} \in \Sigma^{+}$ obtained by applying merge sequence $M_{\sbb}$.
    For any partition $\sbb=\sbb_1 \shortmid \dots \shortmid \sbb_{n}$, where $\sbb
    _i\in V^{+}, i\in[n], n \in \NN$, the following hold:
    \begin{enumerate}
        \item For each $\sbb_i \in \sbb$, 
        there exists a merge sequence $M_{\sbb_i}$ such that applying $M_{\sbb_i}$ to the string $\dec(\sbb_i)$ creates $\sbb_i$ and $[M_{\sbb}]_{\sbb_i} \shifteq M_{\sbb_i}$.
        \item For all $\sbb_i, \sbb_j \in \sbb,
        i \neq j$,
        if $m \in [M_{\sbb}]_{\sbb_i}$ then $m \notin [M_{\sbb}]_{\sbb_j}$ and vice-versa,
        \item For each merge $m \in M_{\sbb}$ there exists $\sbb_i \in \sbb$ such that $m \in [M_{\sbb}]_{\sbb_i}$.
    \end{enumerate}
\end{lemma}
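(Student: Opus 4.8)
The plan is to prove the three claims essentially by strong induction on the length of the merge sequence $M_{\sbb}$, using the structure of how BPE merges act on token positions. The key observation is that a merge $m = (r_{t,t'}, i, j)$ combines two adjacent tokens, and at the time it is applied those two tokens each lie entirely within some block of the final partition boundary — or they straddle a boundary. I would first argue that no merge in $M_{\sbb}$ can ever straddle a partition boundary: if it did, then the merged token would span characters from two different $\sbb_k$'s, and since merges only combine (never split), that merged token would have to be a sub-token of some final token $t_\ell$ in $\sbb$; but then $t_\ell$ itself would cross the boundary between $\sbb_k$ and $\sbb_{k+1}$, contradicting that $\sbb = \sbb_1 \shortmid \cdots \shortmid \sbb_n$ is an honest concatenation at the token level. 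This is the crux and I expect it to be the main obstacle — making precise the bookkeeping of ``which characters a token covers at an intermediate step'' and showing the covered intervals are nested, so that the interval of a merged token is contained in the interval of some final token, hence cannot cross a boundary.

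Once straddling merges are ruled out, claims 2 and 3 are almost immediate: every merge $m = (r, i, j) \in M_{\sbb}$ has its character interval $[i,j]$ contained in the interval $[u_k, v_k]$ of exactly one block $\sbb_k$ (the blocks tile $[1,|\sigmab|]$ disjointly), so $m \in [M_{\sbb}]_{\sbb_k}$ for that unique $k$ and for no other, giving claim 3 (existence) and claim 2 (uniqueness). For claim 1, I would show that the subsequence $[M_{\sbb}]_{\sbb_i}$, when the character indices are shifted by $-(u_i - 1)$ so that the substring $\dec(\sbb_i)$ is re-indexed starting at position $1$, is a valid merge sequence for $\dec(\sbb_i)$ that produces exactly $\sbb_i$. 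Validity means: (a) the merges are in the correct BPE order $\prec$ — inherited from $M_{\sbb}$ since restricting an ordered sequence and shifting all positions by a constant preserves the order by $(\text{rule position}, \text{left index})$; (b) each merge is applicable when its turn comes — because the state of the $\sbb_i$-region under $[M_{\sbb}]_{\sbb_i}$ coincides with the state of that region under the prefix of $M_{\sbb}$ up to that point (no outside merge touches this region, since no merge straddles a boundary and merges strictly inside other blocks act on disjoint character ranges); and (c) the end result is $\sbb_i$, which is just the final state of the $\sbb_i$-region. The shift by $n = -(u_i-1)$ then gives $[M_{\sbb}]_{\sbb_i} \shifteq M_{\sbb_i}$ by Definition~\ref{def:equivalence_class}.

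For the induction itself, the cleanest route is induction on $|M_{\sbb}|$: if $M_{\sbb}$ is empty, $\sbb$ is the single-character tokenization and every block is too, so all three claims hold trivially with empty subsequences. For the inductive step, consider the first merge $m_1 = (r, i, j)$; by the no-straddling argument it lies inside a unique block, say $\sbb_i$. Apply $m_1$ to $\sigmab$ to get an intermediate token sequence $\sigmab^{(1)}$ with one fewer token, note that $M_{\sbb}$ minus its first element is a valid merge sequence producing $\sbb$ from $\sigmab^{(1)}$, that the partition of $\sbb$ induces a partition of $\sigmab^{(1)}$ (the block $\sbb_i$'s single-character prefix now has $m_1$ applied, the others unchanged), and invoke the induction hypothesis. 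Re-attaching $m_1$ to the front of $[M_{\sbb}]_{\sbb_i}$ and checking it respects order and applicability finishes the step. The one subtlety to be careful about in the writeup is that the operator $[M]_{\sbb'}$ in Definition~\ref{def:merge-subseq} is defined via character indices $u \le i < j \le v$, so I must consistently track, for each block $\sbb_k$, the fixed character interval $[u_k, v_k]$ it spans (these are determined by $\dec$ and the partition and do not change during tokenization), and verify that ``$m$ acts inside $\sbb_k$'' is equivalent to ``$m$'s interval $\subseteq [u_k,v_k]$'' — which is exactly the no-straddling lemma.
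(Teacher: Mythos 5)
Your proposal is correct and follows essentially the same route as the paper's proof: both hinge on the observation that no merge in $M_{\sbb}$ can straddle a block boundary of the token-level partition (because the resulting merged token would be contained in some final token, which would then cross the boundary), from which claims 2 and 3 follow directly and claim 1 follows by shifting the restricted subsequence $[M_{\sbb}]_{\sbb_i}$. You are somewhat more explicit than the paper about \emph{why} the restricted subsequence remains applicable and reproduces $\sbb_i$ when run on $\dec(\sbb_i)$ in isolation --- the paper asserts this in a single ``so these merges tokenize $c_u \shortmid \dots \shortmid c_v$ into $\sbb_i$'' --- and you propose framing the bookkeeping as an induction on $|M_\sbb|$, but the core argument and decomposition are the same.
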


\begin{proof}
    \begin{enumerate}
        \item If  $\sbb_i$ is a tokenization of a single character $\dec(\sbb_i)=c$, then $[M_{\sbb}]_{\sbb_i}$ is the empty sequence and the statement holds trivially.
        Assume $\sbb_i$ is a tokenization of the substring $c_u \shortmid \dots \shortmid c_v$ of $\sigmab$, with $v>u>0$, and $[M_{\sbb}]_{\sbb_i}=(m_1,\dots, m_n)$.
        By Definition~\ref{def:merge-subseq}, all merges $m=(r,j_1,j_2) \in M_{\sbb}$ with $u\leq j_1 < j_2 \leq v$ belong in $[M_{\sbb}]_{\sbb_i}$, so these merges tokenize $c_u \shortmid \dots \shortmid c_v$ into $\sbb_i$.
        Then, the merge sequence $M_{\sbb_i}=(m'_1,\dots,m'_n)$, where for all $k\in[n]$ it holds that $m_{k}=(r,j_1,j_2)$ and $m'_{k}=(r,j_1-u,j_2-u)$, $r\in \Rcal$, contains the same merge rules in the same order, but with indices shifted left by $u$.
        So if $M_{\sbb_i}$ is applied to the string $\dec(\sbb_i)$ it will create $\sbb_i$.
        \item If  $[M_{\sbb}]_{\sbb_i}$ or $[M_{\sbb}]_{\sbb_j}$ are the empty sequence, meaning $\sbb_i$ or $\sbb_j$ are a tokenization of only a single character in $\sigmab$, then the statement holds trivially.
        If $\sbb_i$ is a tokenization of the substring $c_{u}\shortmid \dots \shortmid c_{v}$ and $\sbb_j$ is a tokenization of the substring $c_{u'} \shortmid \dots \shortmid c_{v'}$, since $i \neq j$ it must be that either $u<v<u'<v'$ or $u'<v'<u<v$.
        But for all $m=(r,i_1, i_2) \in [M_{\sbb}]_{\sbb_i}$ it holds that $u\leq i_1<i_2 \leq v$, and for all $m'=(r',j_1,j_2) \in [M_{\sbb}]_{\sbb_j}$ it holds that $u'\leq j_1 < j_2 \leq v'$.
        Intuitively, it is not possible for a merge to span two different subsequences $\sbb_i, \sbb_{j}$ in the partition of $\sbb$, because then (part of) $\sbb_i$ and $\sbb_{j}$ would be merged.
        \item Each merge $m=(r,j,k)\in M_{\sbb}, r\in\Rcal$ must have $1 \leq j<k \leq |\sigmab|$.
        Because the whole string $\sigmab$ is tokenized into $\sbb$ and, by definition, the token merged by $m$ cannot be part of two different subsequences in the partition, there must exist $\sbb_i\in \sbb$ that is a tokenization of a substring $c_u \shortmid \dots \shortmid c_v$ of $\sigmab$ with $u\leq j<k \leq v$.
        So by Definition~\ref{def:merge-subseq}, $m\in[M_{\sbb}]_{\sbb_i}$.
    \end{enumerate}
\end{proof}

Building up on Lemma~\ref{lemma:merge-subseq}, we show that BPE-based tokenizers are non-recovering.

\begin{lemma}\label{lemma:non-recovering-bpe}
    BPE-based tokenizers are non-recovering.
\end{lemma}
\begin{proof}
    Assume that $\sbb\shortmid t$ is canonical.
    Then, there exists a unique merge sequence $M_{\sbb\shortmid t}$ that creates it following the BPE algorithm.
    From Lemma~\ref{lemma:merge-subseq}, $M_{\sbb\shortmid t}$ can be split into $[M_{\sbb\shortmid t}]_{\sbb}$ and $[M_{\sbb\shortmid t}]_t$,
    where $[M_{\sbb\shortmid t}]_{\sbb}$ contains the merges that create $\sbb$ and $[M_{\sbb\shortmid t}]_t$ contains the merges that create $t$.
    From Lemma~\ref{lemma:merge-subseq}, there exists a merge sequence $M_{\sbb}$ that creates $\sbb$ when applied to $\dec(\sbb)$ and $[M_{\sbb\shortmid t}]_{\sbb} \shifteq M_{\sbb}$.
    Because $\sbb$ is a prefix of $\sbb \shortmid t$, the index shift is zero and we have that $[M_{\sbb\shortmid t}]_{\sbb} = M_{\sbb}$.
    
    Since $\sbb$ tokenized according to $[M_{\sbb\shortmid t}]_{\sbb} = (m_1,\dots,m_n)$ is non-canonical, there must exist a different, canonical tokenization $\sbb'\neq \sbb$ of the same character string, $\dec(\sbb)=\dec(\sbb')$.
    Let $M_{\sbb'}=(m'_1,\dots,m'_{n'})$ be the unique merge sequence that creates $\sbb'$ from $\dec(\sbb)$ according to the BPE algorithm.
    Because $M_{\sbb'}\neq [M_{\sbb\shortmid t}]_{\sbb}$, it must be that either there exists at least one $i$, $i\leq \min(n,n')$, such that $m_i \neq m'_i$, or $m_i = m'_i$ for all $i \in[\min(n,n')]$ but $n \neq n'$.
    
    We will first examine the first case. 
    Let $m_i \in [M_{\sbb\shortmid t}]_{\sbb}$ and $m'_i \in M_{\sbb'}$ be the first merges that are different between $[M_{\sbb\shortmid t}]_{\sbb}$ and $M_{\sbb'}$, meaning $\forall j<i: \; m_j=m'_j$, for $m_j \in [M_{\sbb\shortmid t}]_{\sbb}$, $m'_j \in M_{\sbb'}$.
    Because $\sbb'$ is canonical and $\sbb$ is not, it must be that 
    $m'_i \prec m_i$.
    We will now compare $M_{\sbb \shortmid t}$, $[M_{\sbb\shortmid t}]_{\sbb}$ and $M_{\sbb'}$.
    There are two sub-cases:
    \begin{enumerate}
        \item The first $i$ merges in $M_{\sbb \shortmid t}$ are the same as in $[M_{\sbb\shortmid t}]_{\sbb}$.
        This means that the first $i-1$ merges are the same as in $M_{\sbb'}$.
        Then, merge $m_i$ being applied instead of $m'_i \prec m_i$ on substring $\dec(\sbb)$, implies that $M_{\sbb \shortmid t}$ cannot be the merge sequence that creates the canonical tokenization of $\dec(\sbb \shortmid t)$ according to BPE.

        \item The first $i$ merges in $M_{\sbb \shortmid t}$ are not the same as in $[M_{\sbb\shortmid t}]_{\sbb}$.
        This means that there exists at least one merge $m \in M_{\sbb \shortmid t}$ among the first $i$ merges in $M_{\sbb \shortmid t}$ such that $m \notin [M_{\sbb\shortmid t}]_{\sbb}$.
        For any such merge $m$, as $m \notin [M_{\sbb\shortmid t}]_{\sbb}$, it must hold that $m \in [M_{\sbb\shortmid t}]_{\boldsymbol{t}}$ by  Lemma~\ref{lemma:merge-subseq}.
        So, in $M_{\sbb \shortmid t}$, merge $m_i$ is preceded by the first $i-1$ merges of $[M_{\sbb\shortmid t}]_{\sbb}$ and merge $m$.
        By Lemma~\ref{lemma:merge-subseq}, $m$ does not affect the tokens that will create $\sbb$, so the only merges in $M_{\sbb \shortmid t}$ before $m_i$ that affect  $\sbb$ are the first $i-1$ merges of $[M_{\sbb\shortmid t}]_{\sbb}$, which are the same as $M_{\sbb'}$. Then, as in case 1, merge $m_i$ being applied instead of $m'_i \prec m_i$, implies that $M_{\sbb \shortmid t}$ cannot be the merge sequence that creates the canonical tokenization of $\dec(\sbb \shortmid t)$ according to BPE.
    \end{enumerate}

    We will now examine the case where $m_i = m'_i$ for all $i \in[\min(n,n')]$, $m_i\in [M_{\sbb\shortmid t}]_{\sbb}$, $m'_i \in M_{\sbb'}$ but $n \neq n'$.
    If $n>n'$, then there exists at least one merge that can be applied on $\sbb'$ after all merges of $M_{\sbb'}$ are done, which means that $\sbb'$ cannot be canonical.
    If $n'>n$, then there exists at least one merge that can be applied on $\sbb$ after all merges of $[M_{\sbb\shortmid t}]_{\sbb}$ are done.
    This merge can also be applied on $\sbb \shortmid t$, which means that $\sbb \shortmid t$ cannot be canonical.
    
    All cases lead to a contradiction, which implies that $\sbb\shortmid t$ is non-canonical.
    We have shown that if $\sbb$ is non-canonical then $\sbb \shortmid t$ is also non-canonical. Thus,
    BPE-based tokenizers are non-recovering.
\end{proof}

\clearpage
\subsection{Unigram-based tokenizers are non-recovering.}

\begin{theorem}
    Unigram-based tokenizers are non-recovering.
\end{theorem}
\begin{proof}
    If $\sbb$ is non-canonical according to Unigram, then let $\sbb'$ denote the canonical tokenization of the same character string, $\dec(\sbb)=\dec(\sbb')$.
    Because $\sbb'$ is canonical, it must be that $r(\sbb')>r(\sbb)$.
    It follows that $r(\sbb\shortmid t)=r(\sbb)r(t) < r(\sbb') r(t) = r(\sbb'\shortmid t)$, so $\sbb\shortmid t$ cannot be the canonical tokenization of $\dec(\sbb \shortmid t)$. We have shown that if $\sbb$ is non-canonical then $\sbb \shortmid t$ is also non-canonical. Thus,
    Unigram-based tokenizers are non-recovering.
\end{proof}

\subsection{Wordpiece-based tokenizers are non-recovering.}
\label{app:recovering}

\begin{theorem}
    Wordpiece-based tokenizers are non-recovering.
\end{theorem}
\begin{proof}
    If $\sbb=t_1 \shortmid \dots \shortmid t_n$ is non-canonical according to Wordpiece, then let $\sbb'=t'_1 \shortmid \dots \shortmid t'_{n'}$ denote the canonical tokenization of the same character string, $\dec(\sbb)=\dec(\sbb')$, $n, n' \in \NN$.
    Because $\sbb' \neq \sbb$, there must exist at least one $i \leq \min(n, n')$ such that $t_i \neq t'_i$.
    It is impossible that $t_i=t'_i$ for all $i \in \min(n, n')$ but $n \neq n'$, because then $\dec(\sbb) \neq \dec(\sbb')$, as one would be a prefix of the other.
    Let $t_i, t'_i$, with $i \in \min(n, n')$ be the first different token between $\sbb$ and $\sbb'$, \ie, $\forall j<i: t_j=t'_j$ but $t_i \neq t'_i$.
    Since $\sbb'$ is canonical, it must be that $|t'_i|>|t_i|$, where
    $|t|=|\dec (t)|$
    represents the size of token $t$ based on how many characters in $\Sigma$ it encodes.
    Because $\sbb$ is a prefix of $\sbb \shortmid t$, the first $i$ tokens are the same, but $\sbb \shortmid t$ cannot be canonical because at (token) index $i$  there exists $t'_i \in V$ that encodes more characters than $t_i$, $|t'_i|>|t_i|$.
    We have shown that if $\sbb$ is non-canonical then $\sbb \shortmid t$ is also non-canonical. Thus,
    Wordpiece-based tokenizers are non-recovering.
\end{proof}
\clearpage

\section{Non-recoverability under Pretokenization}
\label{app:pretokenization}
State-of-the-art LLMs use a tool called \emph{pretokenizer} in order to split long strings into segments that can be tokenized simultaneously and independent of each other. 
Formally, the pretokenizer is a function $\pretokenize: \Sigma^+ \rightarrow (\Sigma^+)^+$, where $(\Sigma^+)^+$ represents sequences of strings in $\Sigma^+$, such that for string $\sigmab \in \Sigma^+$, $\pretokenize(\sigmab)=(\sigmab_1,\dots,\sigmab_n)$ where $\sigmab = \sigmab_1 \shortmid \dots \shortmid \sigmab_n$, $n \in \NN$.
The encoder with pretokenizer can then be defined as $\enc_{\texttt{pre}}(\sigmab) = \enc(\sigmab_1) \shortmid \dots \shortmid \enc(\sigmab_n)$, where $\enc: \Sigma^+ \rightarrow V^+$ is an encoder based on the tokenization algorithm used in conjunction with the pretokenizer.
We extend the definition of canonical sequences to account for the effect of the pretokenizer.

\begin{definition} \label{def:pretok-canonical}
    Let $\Tcal=(\Sigma, V, \enc_{\pretokenize}, \dec)$ be a tokenizer and $\sigmab \in \Sigma^+$.
    A tokenization $\sbb \in V^+, \dec(\sbb)=\sigmab$ of $\sigmab$ is \emph{canonical} if $\sbb = \enc_{\texttt{pre}}(\sigmab)$, where $\enc(\sigmab) = \enc(\sigmab_1) \shortmid \dots \shortmid \enc(\sigmab_n)$ and $\pretokenize (\sigmab) = (\sigmab_1, \dots, \sigmab_n)$, $n\in \NN$.
\end{definition}

Pretokenizers typically work by greedily matching prefixes of a string to a regular expression, splitting when the prefix stops matching, and continuing with the remaining suffix.
If string $\sigmab$ is a match, then $\pretokenize(\sigmab)=\sigmab$, and if $\pretokenize(\sigmab)=(\sigmab_1, \dots, \sigmab_n),  n \in \NN, \sigmab \in \Sigma^+$ then
$\pretokenize(\sigmab_i)=\sigmab_i$ for all $i \in [n]$.
Regular expressions used by pretokenizers additionally satisfy a property called \emph{closed under prefix}~\citep{hayase2025sampling}, though some exceptions apply related to handling whitespace and common english contractions.

\begin{definition} \label{def:closed-under-prefix}
    A pretokenizer $\pretokenize$ is \emph{closed under prefix} if for any string $\sigmab \in \Sigma^+$ where $\pretokenize(\sigmab)=\sigmab$ and any prefix $\sigmab'$ of $\sigmab$, it holds that $\pretokenize(\sigmab')=\sigmab'$.
\end{definition}
\noindent In words, any prefix of a string that is a match to the regular expression is also a match, or equivalently, if a string is not a match then no superstring of it is a match.

We show that tokenizers with pretokenization remain non-recovering when the pretokenizer is closed under prefix.

\begin{theorem}\label{theorem:non-recovering-pretokenizer}
    Let $\Tcal=(\Sigma,V,\enc,\dec)$ be a non-recovering tokenizer and let $\texttt{pre}$ be a pretokenizer closed under prefix. 
    Then, tokenizer $\Tcal'=(\Sigma,V,\enc_{\texttt{pre}},\dec)$ is non-recovering.
\end{theorem}

\begin{proof}
    Let $\sigmab = \sigmab_1 \shortmid \dots \shortmid \sigmab_n = \dec(\sbb)$,
    where $\pretokenize(\sigmab) = (\sigmab_1, \dots, \sigmab_n)$, $\sigmab_i = \dec (\sbb_i)$ for all $i \in [n]$
    and $\sbb = \sbb_1 \shortmid \dots \shortmid \sbb_n$, $n \in \NN$.
    Additionally, let $\sigmab_t = \dec (t)$, so $\sigmab \shortmid \sigmab_t = \dec(\sbb \shortmid t)$.
    By definition~\ref{def:closed-under-prefix} and because $t$ is a single token, it must hold that either $\pretokenize(\sigmab \shortmid \sigmab_t)=(\sigmab_1, \dots, \sigmab_n, \sigmab_t)$ or $\pretokenize(\sigmab \shortmid \sigmab_t)=(\sigmab_1, \dots, \sigmab_n \shortmid \sigmab_t)$.
    For each $\sigmab_i$, it holds by definition that $\pretokenize(\sigmab_i)=\sigmab_i$, so $\enc_{\texttt{pre}}(\sigmab_i)=\enc(\dec(\sbb_i))$.

    By definition~\ref{def:pretok-canonical}, for $\sbb$ to be non-canonical there must be at least one $\sbb_i$ where $\sbb_i \neq \enc_{\texttt{pre}}(\dec(\sbb_i)) = \enc(\dec(\sbb_i))$, so $\sbb_i$ is non-canonical.
    As each substring $\sigmab_i$ is tokenized independently, if $i<n$, or if $i=n$ and $\pretokenize(\sigmab \shortmid \sigmab_t)=(\sigmab_1, \dots, \sigmab_n, \sigmab_t)$, then $\sbb_i$ is also part of $\sbb \shortmid t$, so $\sbb \shortmid t$ is non-canonical.
    Alternatively, if $i=n$ and $\pretokenize(\sigmab \shortmid \sigmab_t)=(\sigmab_1, \dots, \sigmab_n \shortmid \sigmab_t)$, then because $\Tcal$ is non-recovering, $\sbb_n$ being non-canonical implies that $\sbb_n \shortmid t$ is also non-canonical, therefore $\sbb \shortmid t$ is non-canonical.
    In all cases, it holds that if $\sbb$ is non-canonical then $\sbb \shortmid t$ is also non-canonical. Thus,
    tokenizer $\Tcal'$ is non-recovering.
\end{proof}
\clearpage

\section{Proof of Theorem~\ref{th:KL-divergence}}
\label{app:proof-kl-divergence}
Here, we provide the proof of Theorem~\ref{th:KL-divergence}, which we restate below.

\xhdr{Theorem~\ref{th:KL-divergence}}
     Let $d$ be absolutely continuous
     with respect to $p$.
     Moreover, assume that there exist $\sbb\in V^+$ and $t_1, t_2 \in V$ such that $\sbb \shortmid t_1$ is non-canonical with $d(\sbb\shortmid t_1)>0$ and $\sbb\shortmid t_2$ is canonical with $p(\sbb\shortmid t_2)>0$ and $d(\sbb\shortmid t_2)>0$.
     Then, it holds that
    \begin{equation}
        \textrm{KL}(p,\tilde{d}) < \textrm{KL}(p,d).
    \end{equation}

\begin{proof}
    Assume there exists $\hat{\sbb}\in V^+$, $t_1, t_2 \in V$ such that $\hat{\sbb} \shortmid t_1$ is non-canonical and $d(\hat{\sbb} \shortmid t_1)>0$ and $\hat{\sbb}\shortmid t_2$ is canonical and $p(\hat{\sbb} \shortmid t_2)>0$ and $d(\hat{\sbb} \shortmid t_2)>0$.
    Given any token sequence $\sbb \in V^+$, let $p_\sbb= P[T | \Sbb=\sbb]$ be the true next token distribution and $d_\sbb$, $\tilde{d}_\sbb$ be the next token distribution and canonicalized next token distribution given by the LLM.
    Then, $d_{\hat{\sbb}}(t_1\,)>0$. 
    Then, we have that $Z= \sum_{t \in V:\ \hat{\sbb} \shortmid t \text{ is canonical}} d_{\hat{\sbb}}(t)<1$. By definition of $\tilde{d}_{\hat{\sbb}}$, this implies that for all $t \in V$ such that $\tilde{d}_{\hat{\sbb}}(t)>0$, we have that
\vspace{-1mm}
    \begin{equation}\label{eq:smaller_1}
        \frac{\tilde{d}_{\hat{\sbb}}(t)}{d_{\hat{\sbb}}(t)}>1
    \end{equation}
    Note that, because $\hat{\sbb}$ is canonical (by Theorem~\ref{theorem:non-canonical} and because $p(\hat{\sbb} \shortmid t_2)>0$) and $d(\hat{\sbb})>0$, it implies that $\hat{\sbb}$ also has positive probability under $\tilde{d}$, \ie, $\tilde{d}(\hat{\sbb})>0$. In particular, by definition of $\tilde{d}$ we know that $ \tilde{d}(\hat{\sbb})/ d(\hat{\sbb})\geq 1$ and thus using Eq.~\ref{eq:smaller_1} it follows that for any $t$ such that $\tilde{d}_{\hat{\sbb}}(t)>0$, $\tilde{d}(\hat{\sbb}\shortmid t)>0$ and 
\vspace{-1mm}
    \begin{equation}\label{eq:smaller_full}
        \frac{\tilde{d}(\hat{\sbb}\shortmid t)}{d(\hat{\sbb}\shortmid t)} 
        = \frac{\tilde{d}(\hat{\sbb})}{d(\hat{\sbb})} \cdot  \frac{\tilde{d}_{\hat{\sbb}}(t)}{d_{\hat{\sbb}}(t)} 
        >1
    \end{equation}
    We show that the difference in KL-divergence of $p$ from $d$ and $p$ from $\tilde{d}$ is greater than zero. First, we rewrite the difference as follows:
\vspace{-2mm}
    \begin{align}
        \textrm{KL}(p,d)-\textrm{KL}(p,\tilde{d})
        =& \sum_{\sbb\in V^+} p(\sbb) \log\left(\frac{p(\sbb)}{d(\sbb)}\right) - \sum_{\sbb\in V^+} p(\sbb) \log\left(\frac{p(\sbb)}{\tilde{d}(\sbb)}\right) \nonumber
        \\ =& \sum_{\sbb\in V^+} p(\sbb) \left[ \log\left(\frac{p(\sbb)}{d(\sbb)}\right) -  \log\left(\frac{p(\sbb)}{\tilde{d}(\sbb)}\right)   \right] \nonumber
        \\=& \sum_{\sbb \in V^+} p(\sbb) \log\left(\frac{\tilde{d}(\sbb)}{d(\sbb)}\right) \label{eq:KL_diff}
        \\=& \sum_{\sbb \in V^+:\ \tilde{d}(\sbb)>0} p(\sbb) \log\left(\frac{\tilde{d}(\sbb)}{d(\sbb)}\right) \label{eq:positive_tilde_d}
    \end{align}
    where the first equations follow from simple manipulations and Eq.~\ref{eq:positive_tilde_d} follows from the following argument. 
    Whenever $\tilde{d}(\sbb)=0$, it implies that either $d(\sbb)=0$ or that $\sbb$ is non-canonical. Both cases imply that $p(\sbb)=0$ (either by absolute continuity or non-canonicity). Whenever $p(\sbb)$ and $\tilde{d}(\sbb)$ are zero, the contribution of the corresponding term in Eq.~\ref{eq:KL_diff} is interpreted as zero because
        $\lim_{x\to 0^+} x \log x = 0.$
    
    We can break up Eq.~\ref{eq:positive_tilde_d} into two types of summand. For any $\sbb\neq \hat{\sbb}\shortmid t, t\in V$ and $\tilde{d}(\sbb)>0$, it readily follows from the definition of $\tilde{d}_\sbb$ that 
\vspace{-2mm}
    \begin{equation*}
        p(\sbb) \log\left(\frac{\tilde{d}(\sbb)}{d(\sbb)}\right) \geq p(\sbb) \log(1)=0  
    \end{equation*}
    For any $\sbb= \hat{\sbb}\shortmid t, t\in V$ and $\tilde{d}(\sbb)>0$ and $p(\sbb)>0$, it follows from Eq.~\ref{eq:smaller_full} that 
\vspace{-1mm}
    \begin{equation*}
        p(\sbb) \log\left(\frac{\tilde{d}(\sbb)}{d(\sbb)}\right) > p(\sbb) \log(1)=0  
    \end{equation*}
    Thus, we can conclude that, as there exist $t_2\in V $ such that $p(\hat{\sbb}\shortmid t_2)>0$ and $\tilde{d}(\hat{\sbb}\shortmid t_2)>0$,
    \begin{equation*}
        \textrm{KL}(p,d)-\textrm{KL}(p,\tilde{d}) > 0.
    \end{equation*} 
\end{proof}
\clearpage

\section{Examples of Tokenization Multiplicity}
\label{app:examples}
\begin{figure}[h!]
\centering
\begin{subfigure}{\linewidth}
  \centering
  \includegraphics[width=\linewidth]{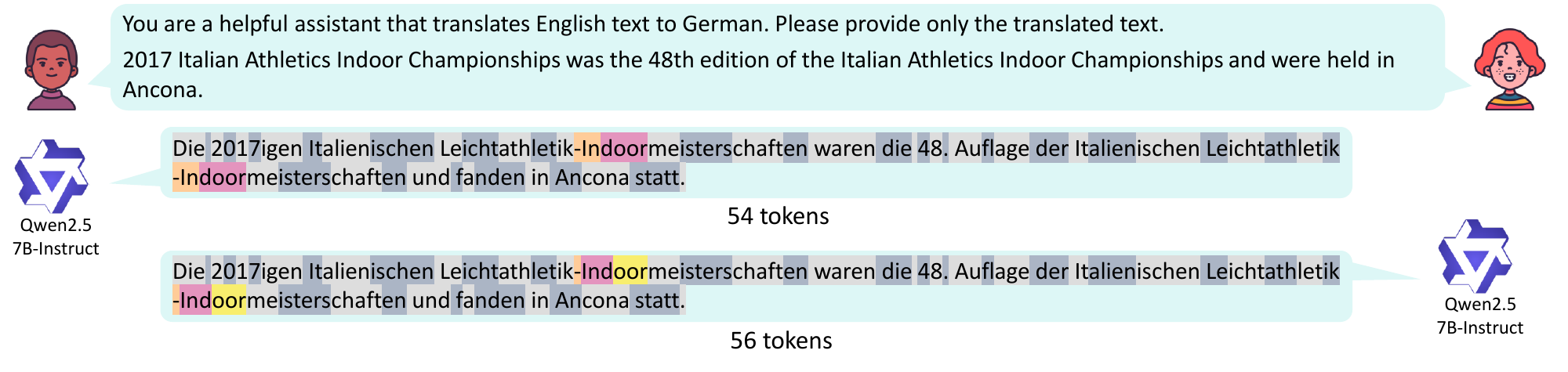}
  \caption{}
  \label{fig:example-translate-app}
\end{subfigure}
\begin{subfigure}{\linewidth}
  \centering
  \includegraphics[width=\linewidth]{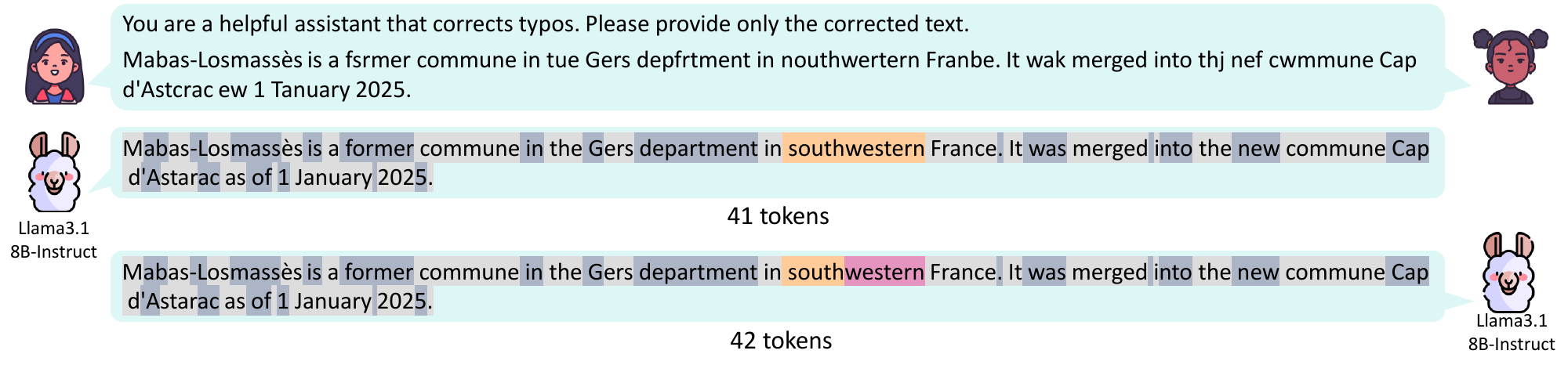}
  \caption{}
  \label{fig:example-typo}
\end{subfigure}
\begin{subfigure}{\linewidth}
  \centering
  \includegraphics[width=\linewidth]{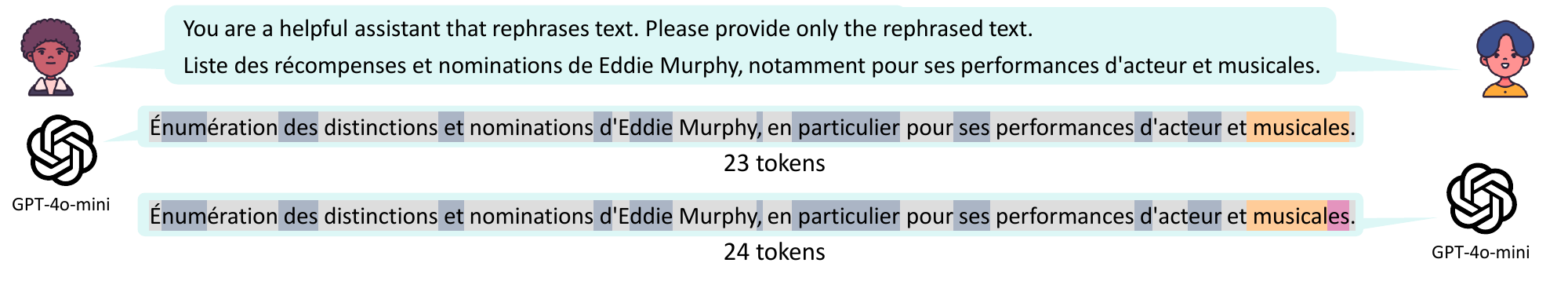}
  \caption{}
  \label{fig:example-rephrase}
\end{subfigure}

\caption{
\textbf{Examples of tokenization multiplicity in (a) translation, (b) spell checking, and (c) rephrasing.}
In each example, the top box shows the input prompt, which consists of an instruction of the task and the accompanying Wikipedia text to be processed.
The latter two boxes show two outputs generated by (a) \texttt{Qwen2.5-7B-Instruct}, (b) \texttt{Llama3.1-8B-Instruct} and (c) \texttt{GPT-4o-mini} as a response to the input prompt, corresponding to the same string but with two different tokenizations.}
\label{fig:examples}
\end{figure}

\begin{figure}[h!]
    \centering
    \includegraphics[width=0.92\linewidth]{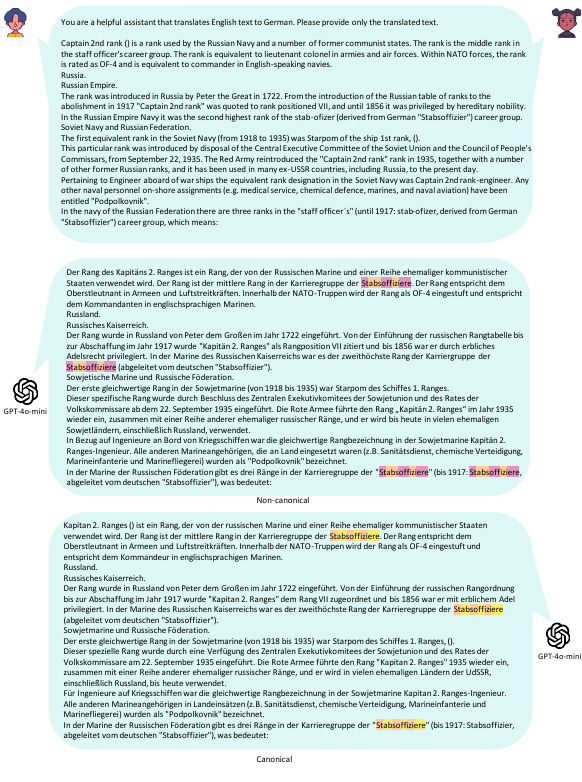}
    \vspace{-2mm}
    \caption{\textbf{Example of tokenization multiplicity in long, similar outputs.}
    The top box consists of a translation instruction and the accompanying Wikipedia text to be translated.
    The latter two boxes show two outputs generated by \texttt{gpt-4o-mini} as response to the input prompt, corresponding to the similar strings but with two different tokenizations for the word ``Stabsoffiziere".}
    \label{fig:example-translate-repeat}
\end{figure}
\clearpage

\section{Additional Experimental Details}
\label{app:exp-details}
\xhdr{Hardware setup}
Our experiments using open-weights models are executed on a compute server equipped with 2 × Intel Xeon Gold 5317 CPU, 1,024 GB main memory, and 2 × A100 Nvidia Tesla GPU (80 GB, Ampere Architecture).
In each experiment a single Nvidia A100 GPU is used.

\xhdr{Datasets and languages}
As input texts for our experiments on the translation, spell checking and rephrasing tasks, we used articles from the most recent Wikipedia dumps\footnote{\href{https://dumps.wikimedia.org/}{https://dumps.wikimedia.org/}} as of December 3rd 2025 in different languages.
See Table~\ref{tab:languages} for a full list of languages and the shortened names used in our plots in section~\ref{sec:multiplicity} and Appendix~\ref{app:exp-results}.
We extracted plain text from the articles
using the \texttt{wikiextractor} tool\footnote{\href{https://github.com/attardi/wikiextractor}{https://github.com/attardi/wikiextractor}}
and sampled $100$ articles from each language with length between $30$ and $300$ characters.
For the experiment with longer outputs at the end of section~\ref{sec:multiplicity}, we sampled $100$ articles in the english language with length between $1000$ and $3000$ characters.
For the spell checking task, we randomly replaced, with probability $10\%$, lowercase latin characters in the input with a random different lowercase latin character.
Additionally, for the experiments on the MGSM task in section~\ref{sec:kl-divergence} we used the MGSM benchmark~\citep{shi2022languagemodelsmultilingualchainofthought}, consisting of 250 grade-school maths problems translated in different languages, but we only considered languages in latin script.
The reason our experiments are solely focused on languages using the latin script is that for most non-latin scripts tokens very often encode at most one character, therefore many, if not all, output strings cannot be generated under multiple tokenizations.

\xhdr{Models and parameters} Table~\ref{tab:models} lists the models used in our experiments, as well as the shortened names used in our results in sections~\ref{sec:multiplicity},~\ref{sec:kl-divergence} and Appendix~\ref{app:exp-results}.
All inferences were performed with temperature set to $1.0$.
The system prompts used in the MGSM task were adopted from an open-source evaluation library\footnote{\href{https://github.com/openai/simple-evals/blob/main/mgsm\_eval.py}{https://github.com/openai/simple-evals/blob/main/mgsm\_eval.py}}, which uses a $0$-shot chain-of-thought prompting technique,
while the system prompts for the other tasks are shown in Tables~\ref{tab:system-prompt-translation},~\ref{tab:system-prompt-typos}, and~\ref{tab:system-prompt-rephrasing}.
For \texttt{gpt-5-mini}, we used the minimum reasoning setting and subtracted from the output token count any reasoning tokens that are not visible to the user.

\xhdr{API details}
To investigate tokenization multiplicity in proprietary models, we used the publicly available official API services from OpenAI, Google, and Anthropic.
Further, to measure the canonicity of an output, its tokenization must be disclosed by the API and a tokenizer must be publically available.\footnote{OpenAI provide a public tokenizer: \href{https://github.com/openai/tiktoken}{https://github.com/openai/tiktoken}}
However, the API services for \texttt{gpt5m}, \texttt{gemini} and \texttt{claude} return only the output string and number of generated tokens, without disclosing the exact tokenization.
For these models, we can identify some cases of non-canonicity, when the number of generated tokens does not match the number of tokens in the canonical tokenization.

\xhdr{Reproducibility}
We have released all code and data required to reproduce our results at the following repository: \href{https://github.com/Networks-Learning/Tokenization-Multiplicity}{https://github.com/Networks-Learning/Tokenization-Multiplicity}.
However, the exact outputs of the proprietary models are not always reproducible.
Specifically, the API services for \texttt{gemini}, \texttt{claude} and \texttt{gpt5m} do not allow setting a random seed for deterministic outputs, while for \texttt{gpt4om} and \texttt{gpt4.1} setting a random seed is possible but the output is deterministic only if it is accompanied by the same \texttt{system\_fingerprint} field, which cannot be controlled by the user.
Therefore, we have included in the repository all outputs from these models where we observed tokenization multiplicity, and believe that one can obtain qualitatively similar results by running our code.

\newpage
\begin{table}[h]
    \centering
    \begin{tabular}{lc}
        Full name & Shortened name\\
        \midrule
         German & de \\
         French & fr \\
         Portuguese & pt \\
         English & en \\
         Turkish & tr \\
         Swahili & sw \\
    \end{tabular}
    \caption{Languages used in our experiments.}
    \label{tab:languages}
\end{table}

\begin{table}[h]
    \centering
    \begin{tabular}{ll}
        Full name & Shortened name\\
        \midrule
         \texttt{Llama-3.1-8B-Instruct} & \texttt{Llama8B} \\
         \texttt{Qwen2.5-7B-Instruct} & \texttt{Qwen7B} \\
         \texttt{gpt-4o-mini} & \texttt{gpt4om} \\
         \texttt{gpt-4.1} & \texttt{gpt4.1} \\
         \texttt{gpt-5-mini} & \texttt{gpt5m} \\
         \texttt{gemini-2.5-flash-lite} & \texttt{gemini} \\
         \texttt{claude-3-haiku-20240307} & \texttt{claude} \\
    \end{tabular}
    \caption{Models used in our experiments.}
    \label{tab:models}
\end{table}

\begin{table}[h]
\centering

\begin{tcolorbox}[
    colframe=white,      
    colback=gray!14,     
    boxrule=0.5mm,       
    arc=4mm,             
    left=3mm,            
    right=3mm,           
    top=3mm,             
    bottom=3mm           
]
\begin{tabular}{ m{0.97\textwidth} }
    \textbf{System:} You are a helpful assistant that translates LANG-1 text to LANG-2. Please provide only the translated text.
\end{tabular}
\end{tcolorbox}
\caption{System prompt used for the translation task. LANG-1 and LANG-2 correspond to full names of languages from Table~\ref{tab:languages}.
}
\label{tab:system-prompt-translation}
\end{table}

\begin{table}[h!]
\centering
\begin{tcolorbox}[
    colframe=white,      
    colback=gray!14,     
    boxrule=0.5mm,       
    arc=4mm,             
    left=3mm,            
    right=3mm,           
    top=3mm,             
    bottom=3mm           
]
\begin{tabular}
{ m{0.97\textwidth} }
    \textbf{System:} You are a helpful assistant that corrects typos. Please provide only the corrected text.
\end{tabular}
\end{tcolorbox}
\caption{System prompt used for the spell checking task.}
\label{tab:system-prompt-typos}
\end{table}

\begin{table}[h!]
\centering
\begin{tcolorbox}[
    colframe=white,      
    colback=gray!14,     
    boxrule=0.5mm,       
    arc=4mm,             
    left=3mm,            
    right=3mm,           
    top=3mm,             
    bottom=3mm           
]
\begin{tabular}{ m{0.97\textwidth} }
    \textbf{System:} You are a helpful assistant that rephrases text. Please provide only the rephrased text.
\end{tabular}
\end{tcolorbox}
\caption{System prompt used for the rephrasing task.}
\label{tab:system-prompt-rephrasing}
\end{table}

\clearpage

\section{Additional Experimental Results}
\label{app:exp-results}
\subsection{Additional Experimental Results on Tokenization Multiplicity}
\label{app:results-multiplicity}

\begin{figure}[h!]
    \centering
    \begin{subfigure}{0.95\linewidth}
        \centering
        \includegraphics[width=0.75\linewidth]{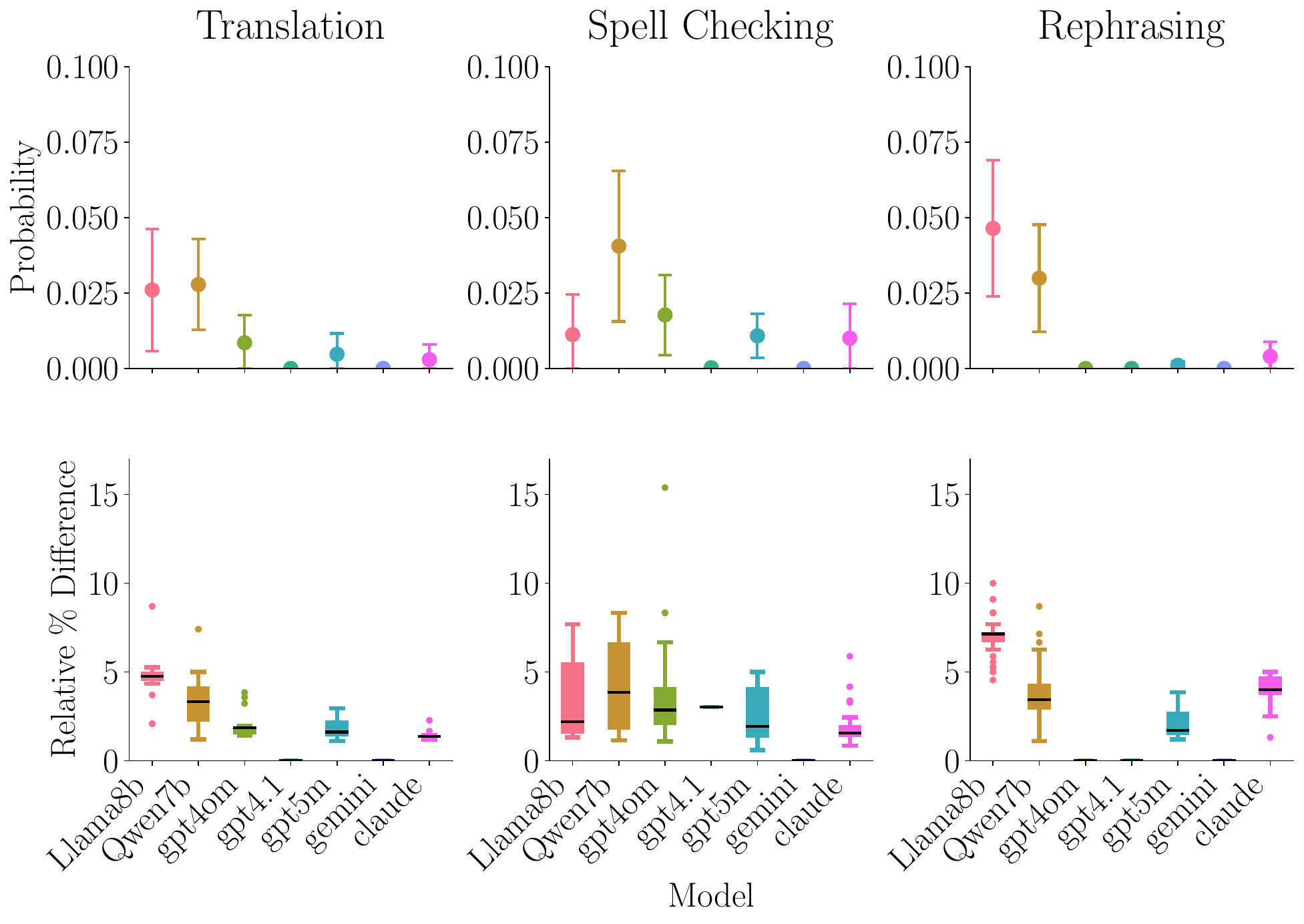}
        \caption{German}
        \label{fig:multiplicity-all-de}
    \end{subfigure}
    \begin{subfigure}{0.95\linewidth}
        \centering
        \includegraphics[width=0.75\linewidth]{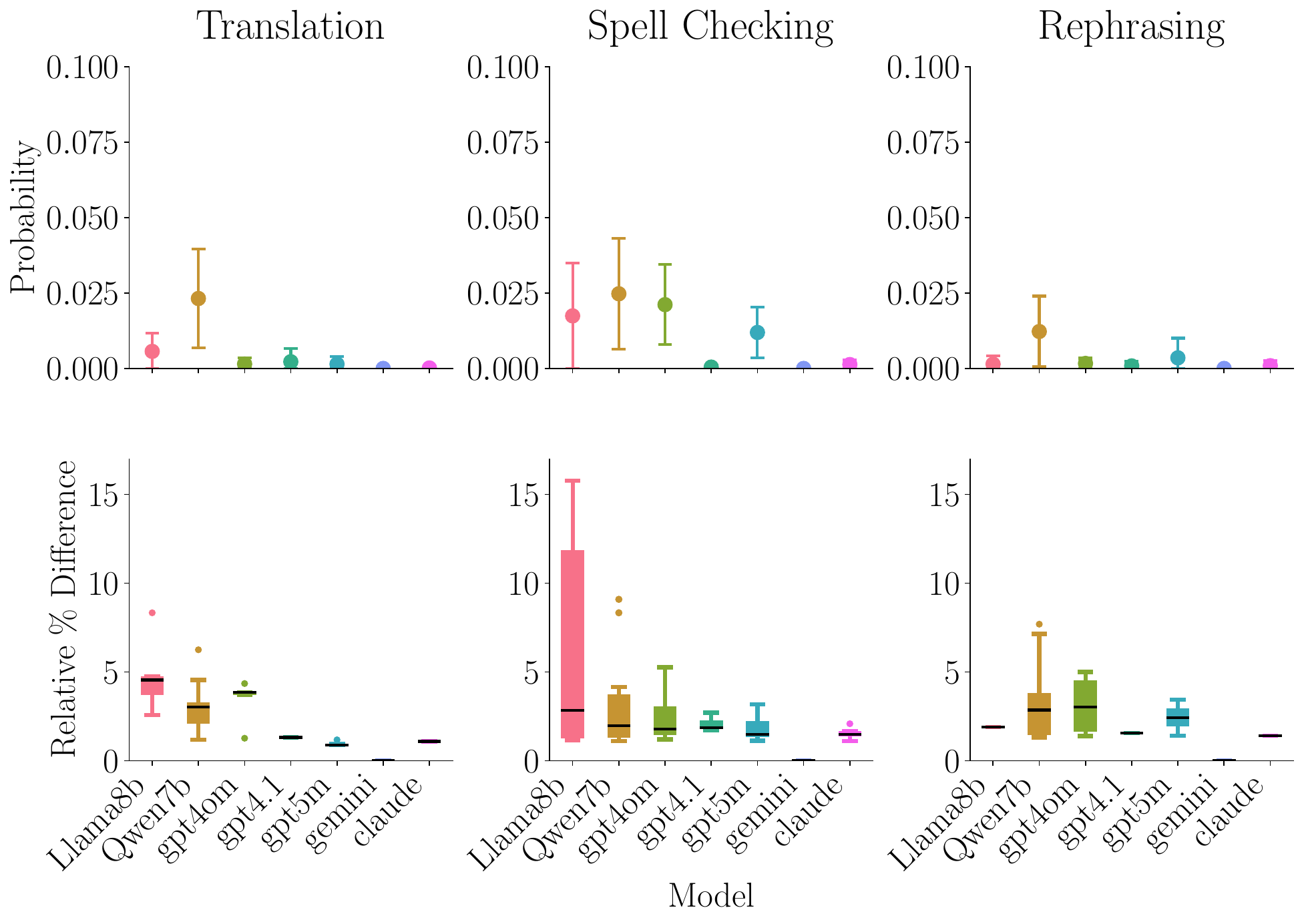}
        \caption{French}
        \label{fig:multiplicity-all-fr}
    \end{subfigure}
    
    \caption{\textbf{Probability of tokenization multiplicity and magnitude of price variation for tasks in the (a) German and (b) French languages.}}
    \label{fig:multiplicity-all-de-fr}
\end{figure}

\begin{figure}[h!]
    \centering
    \begin{subfigure}{1\linewidth}
        \centering
        \includegraphics[width=0.75\linewidth]{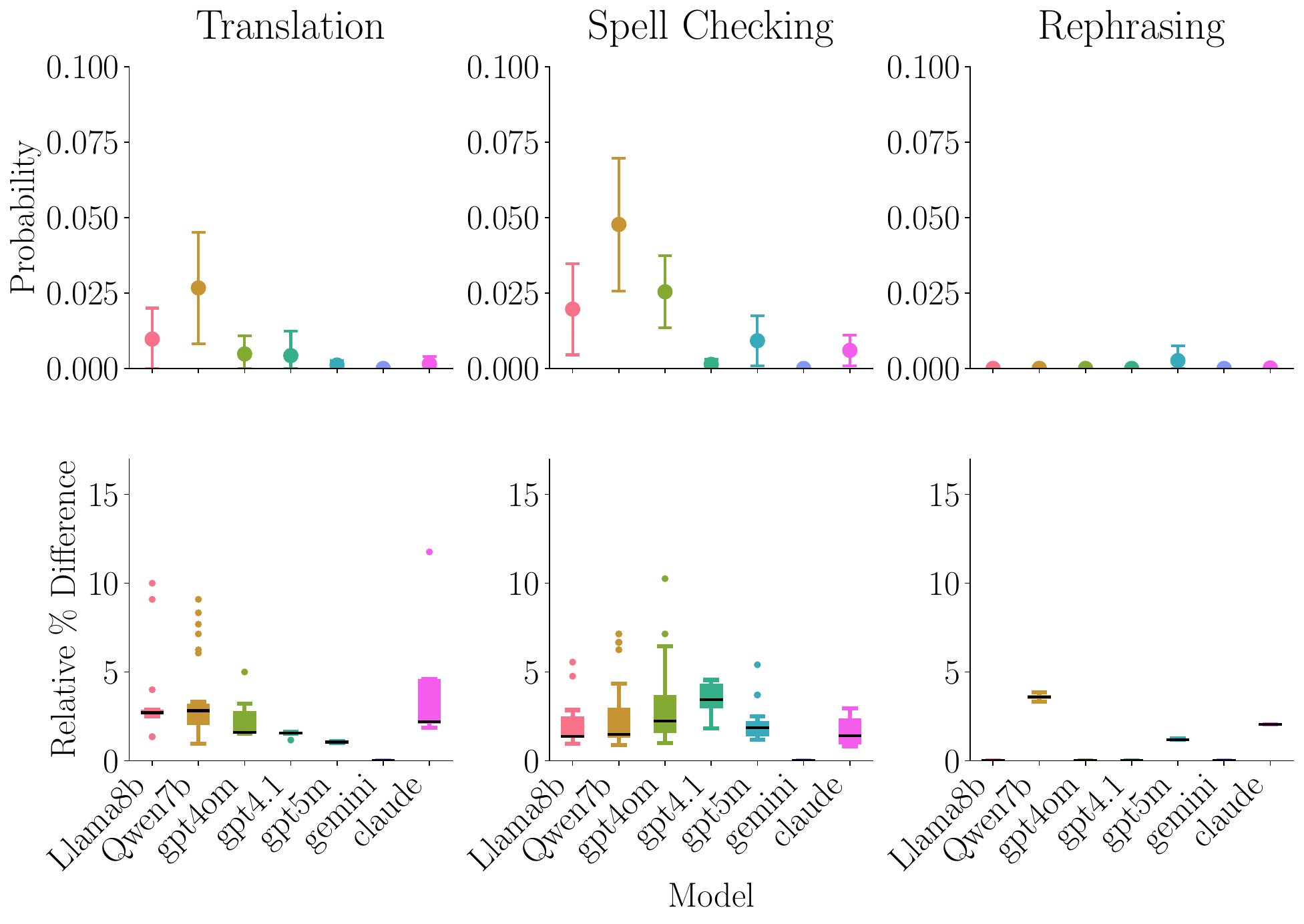}
        \caption{Portuguese}
        \label{fig:multiplicity-all-pt}
    \end{subfigure}
    \begin{subfigure}{1\linewidth}
        \centering
        \includegraphics[width=0.75\linewidth]{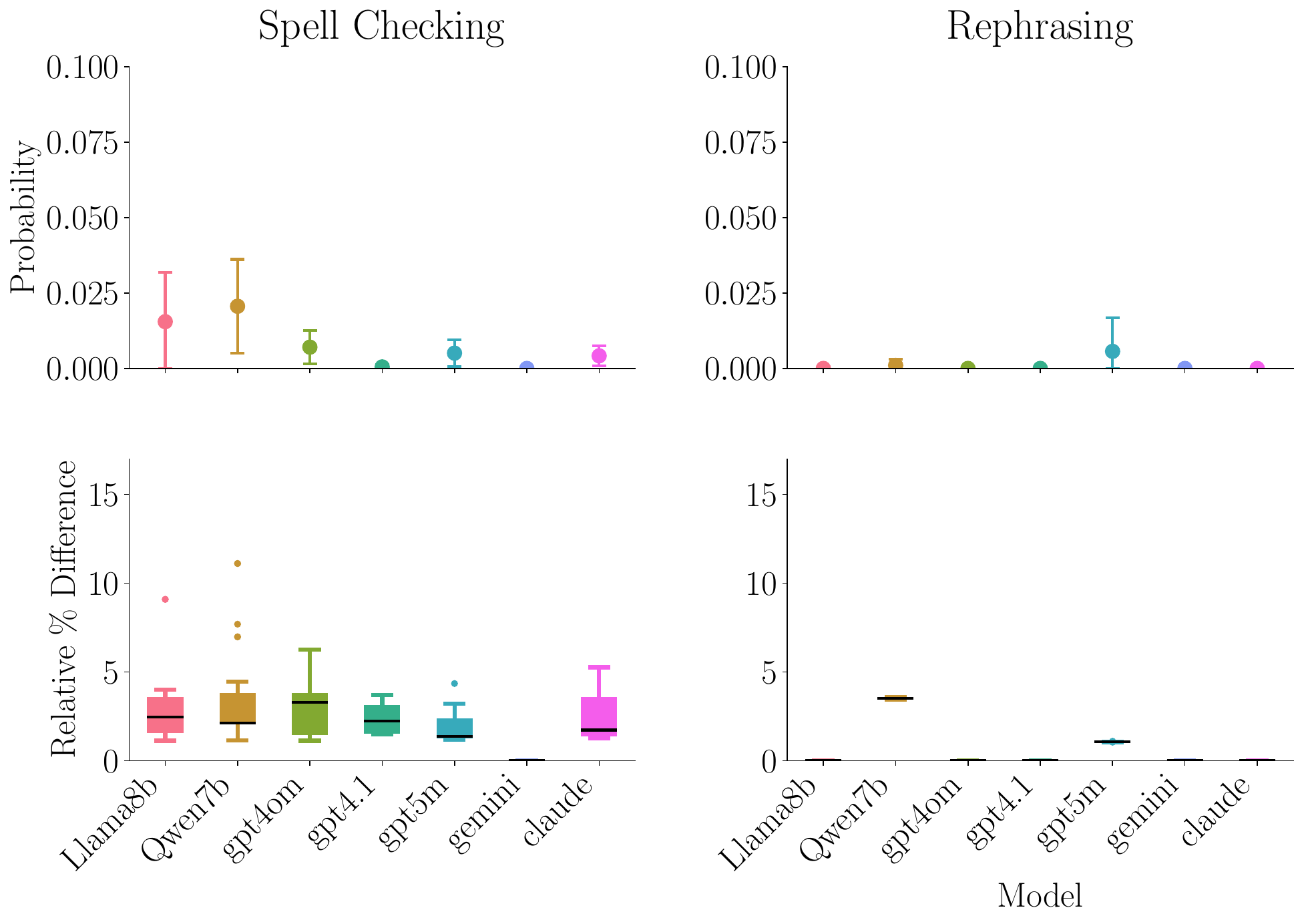}
        \caption{English}
        \label{fig:multiplicity-all-en}
    \end{subfigure}
    
    \caption{\textbf{Probability of tokenization multiplicity and magnitude of price variation for tasks in the (a) Portuguese and (b) English languages.}}
    \label{fig:multiplicity-all-pt-en}
\end{figure}

\begin{figure}[h]
    \centering
    \begin{subfigure}{1\linewidth}
        \centering
        \includegraphics[width=0.75\linewidth]{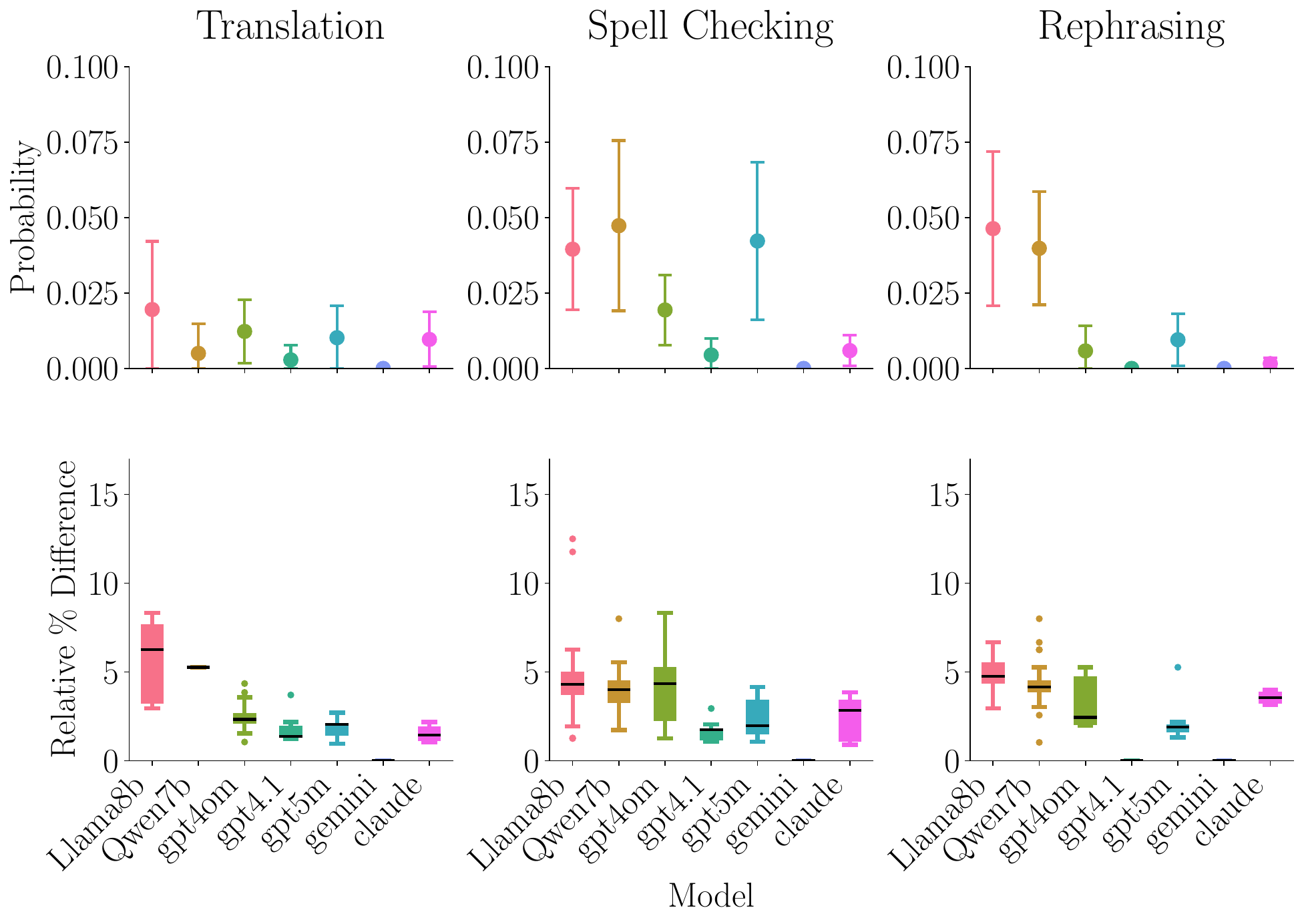}
        \caption{Turkish}
        \label{fig:multiplicity-all-tr}
    \end{subfigure}
    \begin{subfigure}{1\linewidth}
        \centering
        \includegraphics[width=0.75\linewidth]{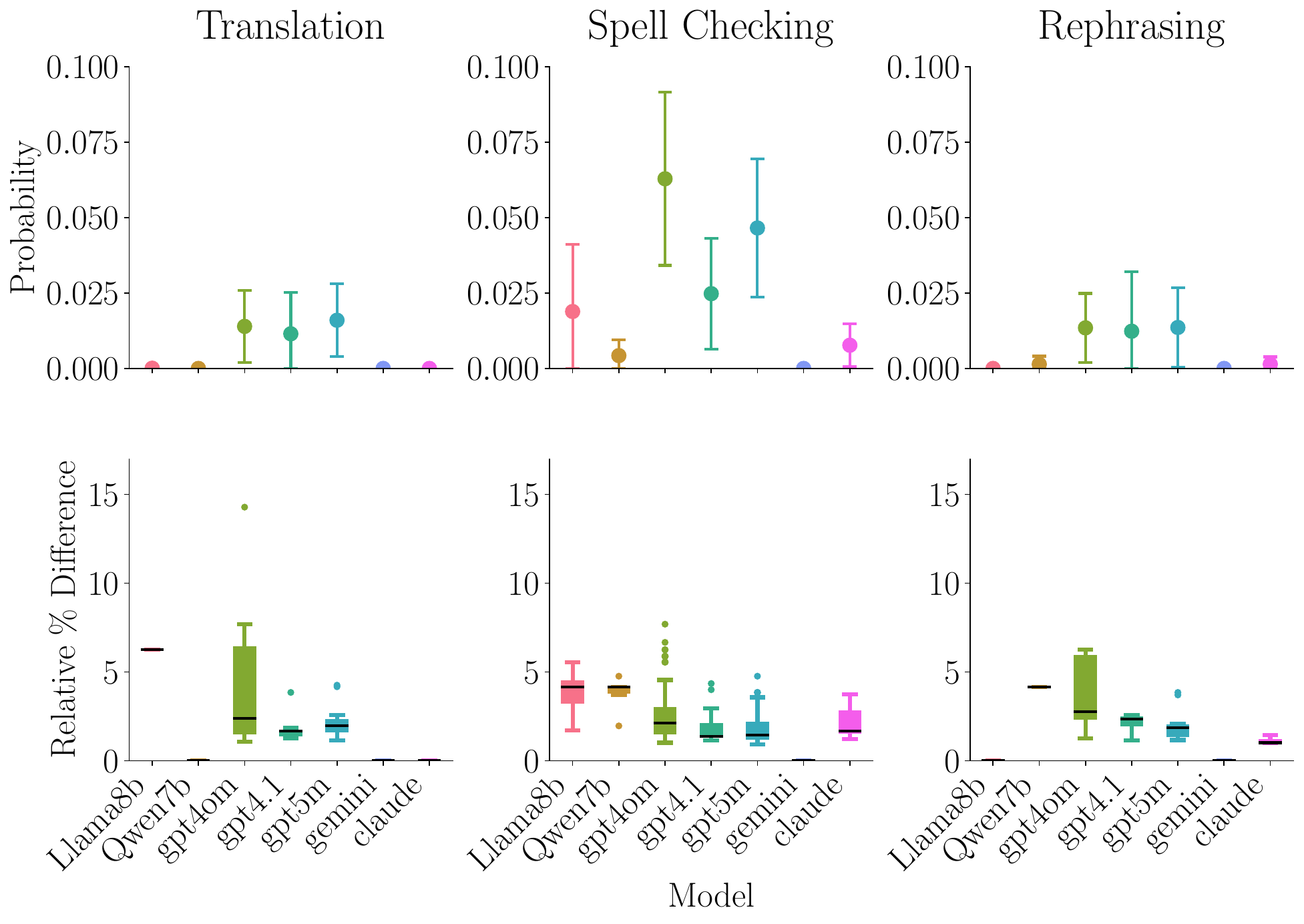}
        \caption{Swahili}
        \label{fig:multiplicity-all-sw}
    \end{subfigure}
    
    \caption{\textbf{Probability of tokenization multiplicity and magnitude of price variation for tasks in the (a) Turkish and (b) Swahili languages.}}
    \label{fig:multiplicity-all-tr-sw}
\end{figure}

\begin{figure}
    \centering
    \includegraphics[width=0.9\linewidth]{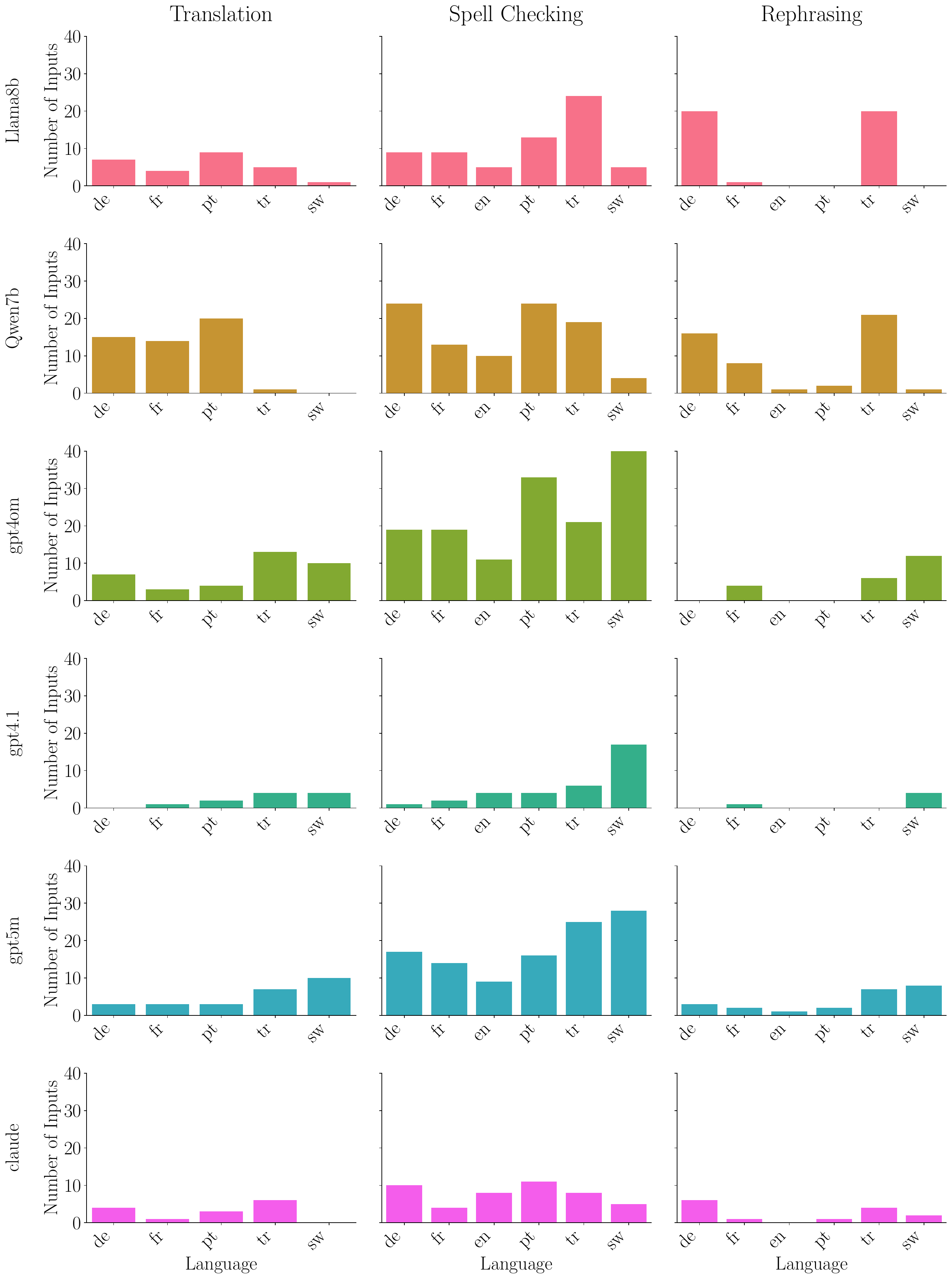}
    \caption{\textbf{Tokenization multiplicity across languages.}
  The plots show the number of inputs prompts for which we observe at least two outputs given by each LLM with the same string but different tokenization lengths.
  Each row corresponds to a different LLM, panels corresponds to one of the three tasks we consider in our experiments and pairs of letters on the x-axis correspond to different languages.}
    \label{fig:conflicts-all}
\end{figure}

\begin{figure}
    \centering
    \includegraphics[width=1\linewidth]{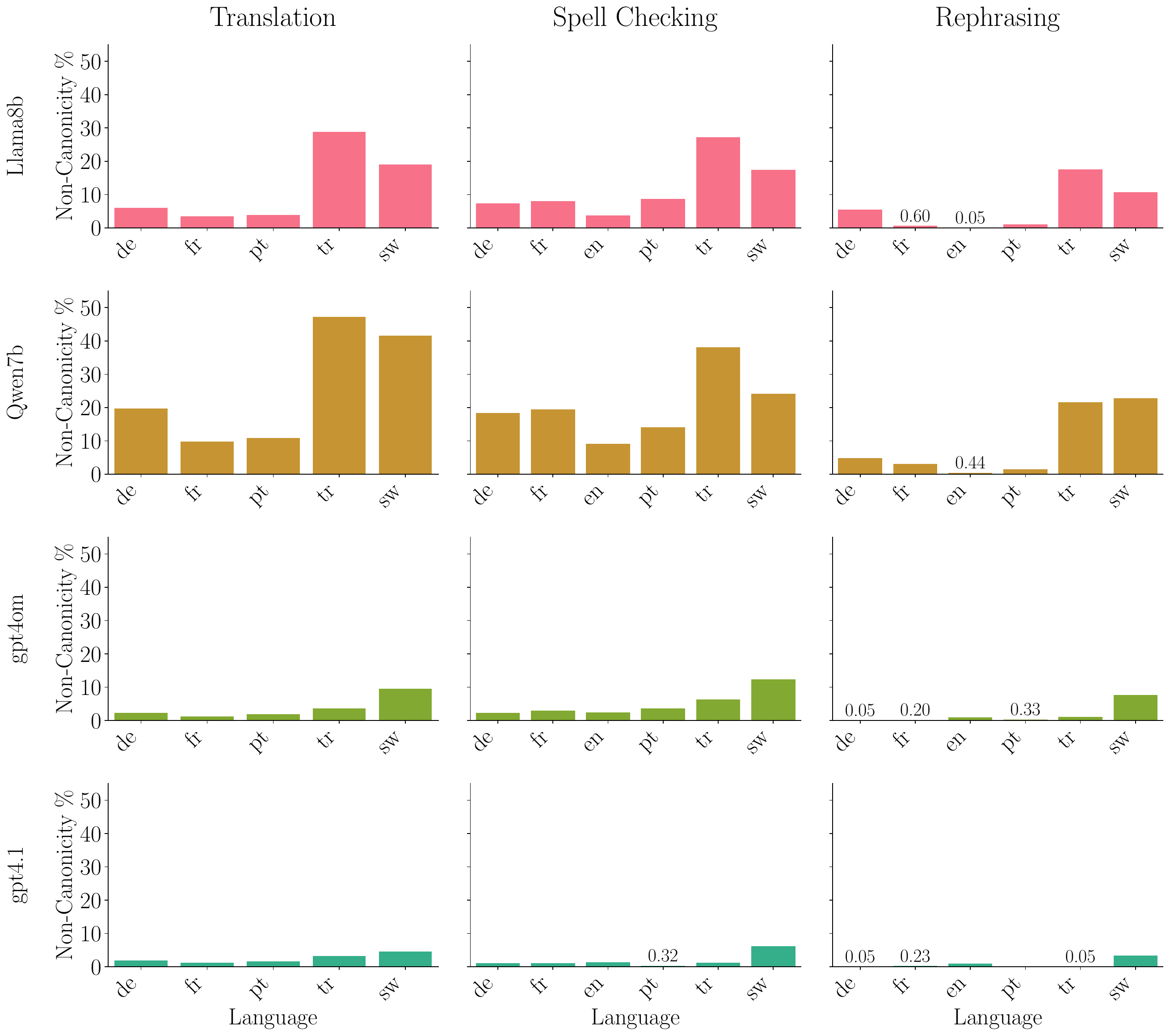}
    \caption{\textbf{Percentage of non-canonical outputs from LLMs that disclose the tokenization.}
    The plots show the percentage of outputs whose generated tokenization does not match the canonical tokenization of the same string.
    Each row corresponds to a different LLM, panels corresponds to one of the three tasks we consider in our experiments and pairs of letters on the x-axis correspond to different languages.}
    \label{fig:non-canon-transparent}
\end{figure}

\begin{figure}
    \centering
    \includegraphics[width=1\linewidth]{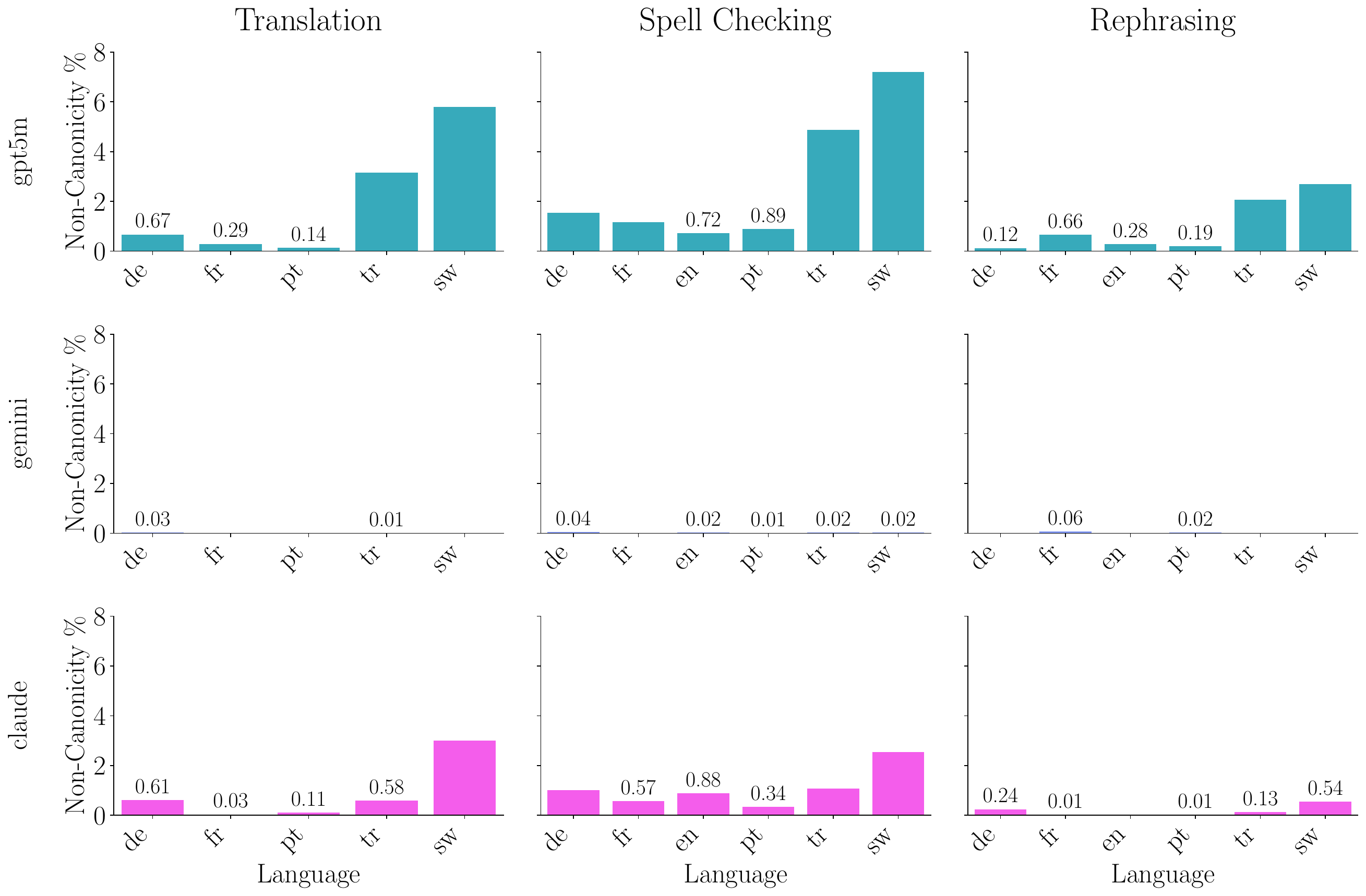}
    \caption{\textbf{Percentage of non-canonical outputs (lower bound) from LLMs that do not disclose the tokenization.}
    The plots show the percentage of outputs whose generated tokenization length does not match the canonical tokenization length for the same string.
    Each row corresponds to a different LLM, panels corresponds to one of the three tasks we consider in our experiments and pairs of letters on the x-axis correspond to different languages.}
    \label{fig:non-canon-nontransparent}
\end{figure}

\clearpage
\begin{figure}
    \centering
    \includegraphics[width=0.84\linewidth]{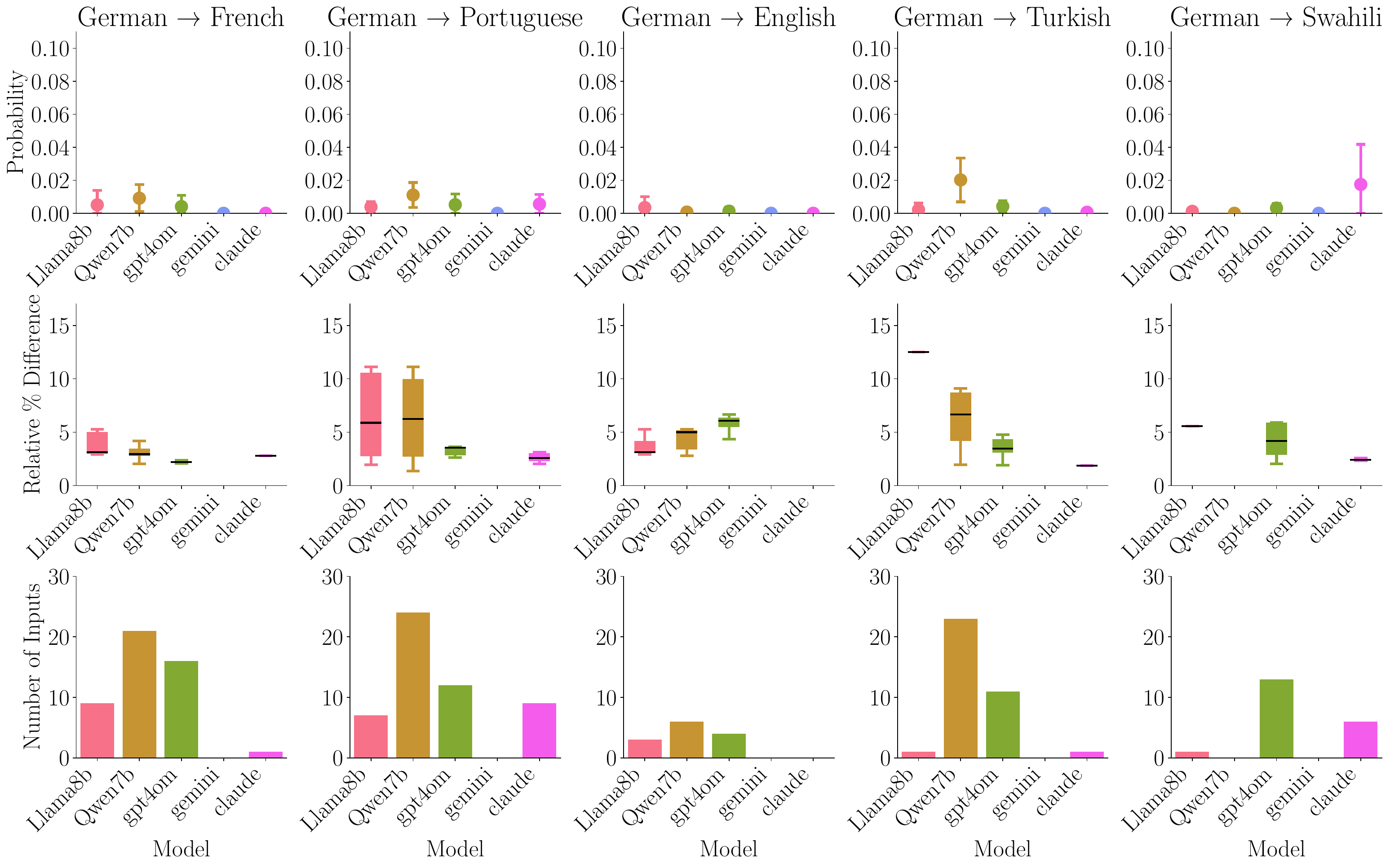}
    \caption{\textbf{Tokenization multiplicity in the translation task from German to other languages.}
    The top row shows the probability of tokenization multiplicity, the middle row shows the magnitude of price variation, and the bottom row shows the number of inputs where we observed tokenization multiplicity.
    Each column corresponds to a different target language.}
    \label{fig:translation-de-all}
\end{figure}

\vspace{-3mm}

\begin{figure}
    \centering
    \includegraphics[width=0.84\linewidth]{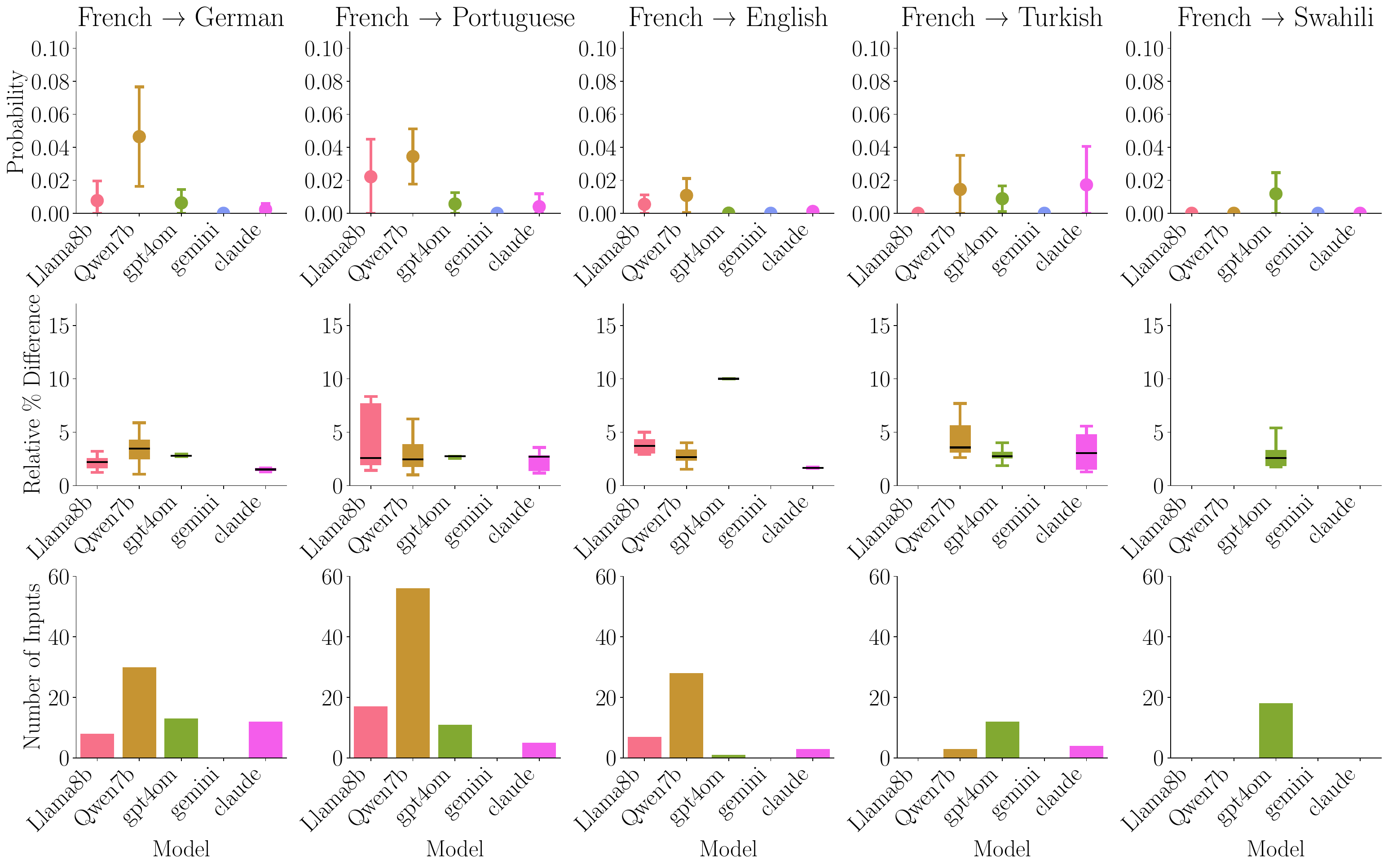}
    \caption{\textbf{Tokenization multiplicity in the translation task from French to other languages.}
    The top row shows the probability of tokenization multiplicity, the middle row shows the magnitude of price variation, and the bottom row shows the number of inputs where we observed tokenization multiplicity.
    Each column corresponds to a different target language.}
    \label{fig:translation-fr-all}
\end{figure}

\begin{figure}
    \centering
    \includegraphics[width=0.84\linewidth]{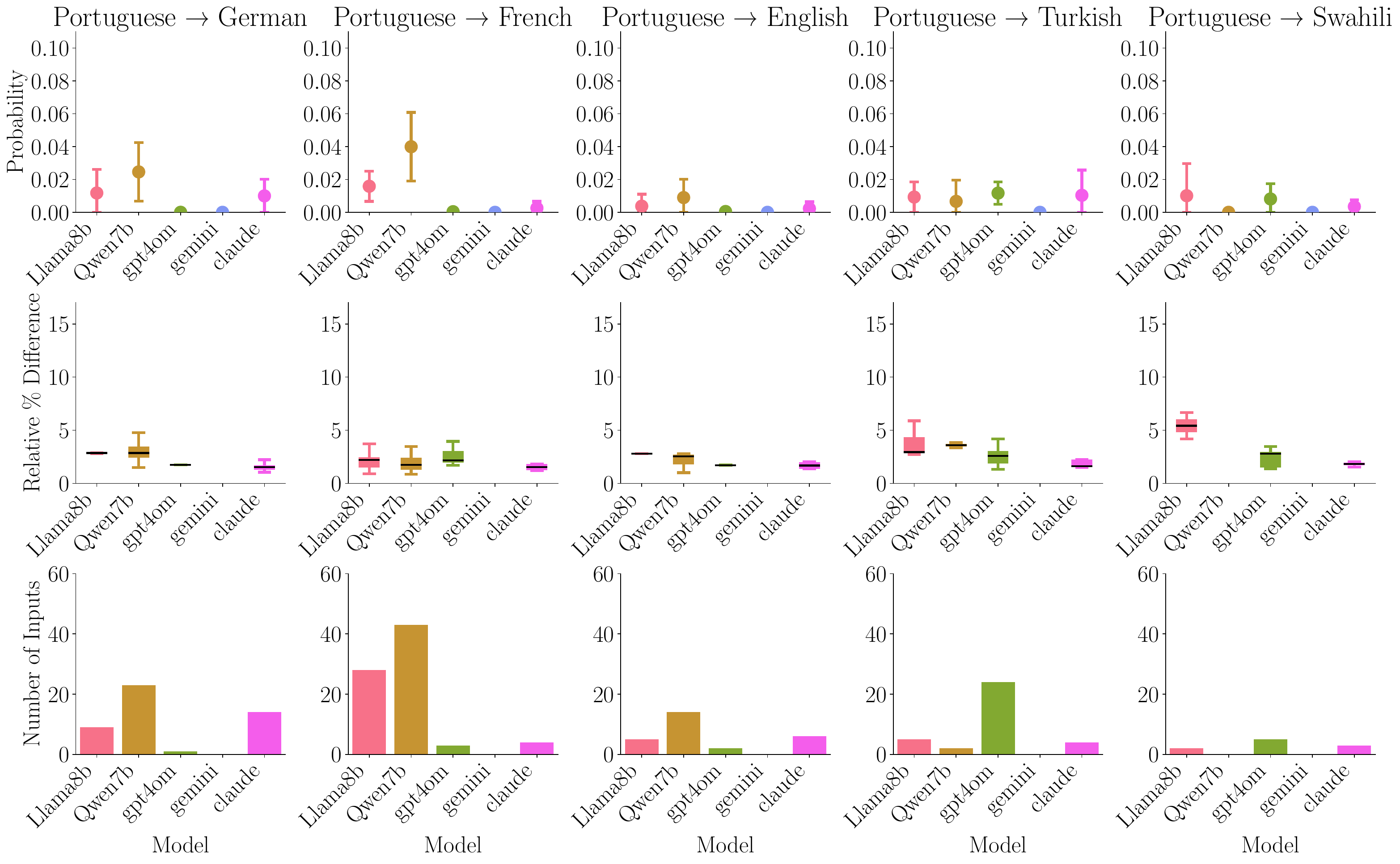}
    \caption{\textbf{Tokenization multiplicity in the translation task from Portuguese to other languages.}
    The top row shows the probability of tokenization multiplicity, the middle row shows the magnitude of price variation, and the bottom row shows the number of inputs where we observed tokenization multiplicity.
    Each column corresponds to a different target language.}
    \label{fig:translation-pt-all}
\end{figure}
\vspace{-3mm}

\begin{figure}
    \centering
    \includegraphics[width=0.84\linewidth]{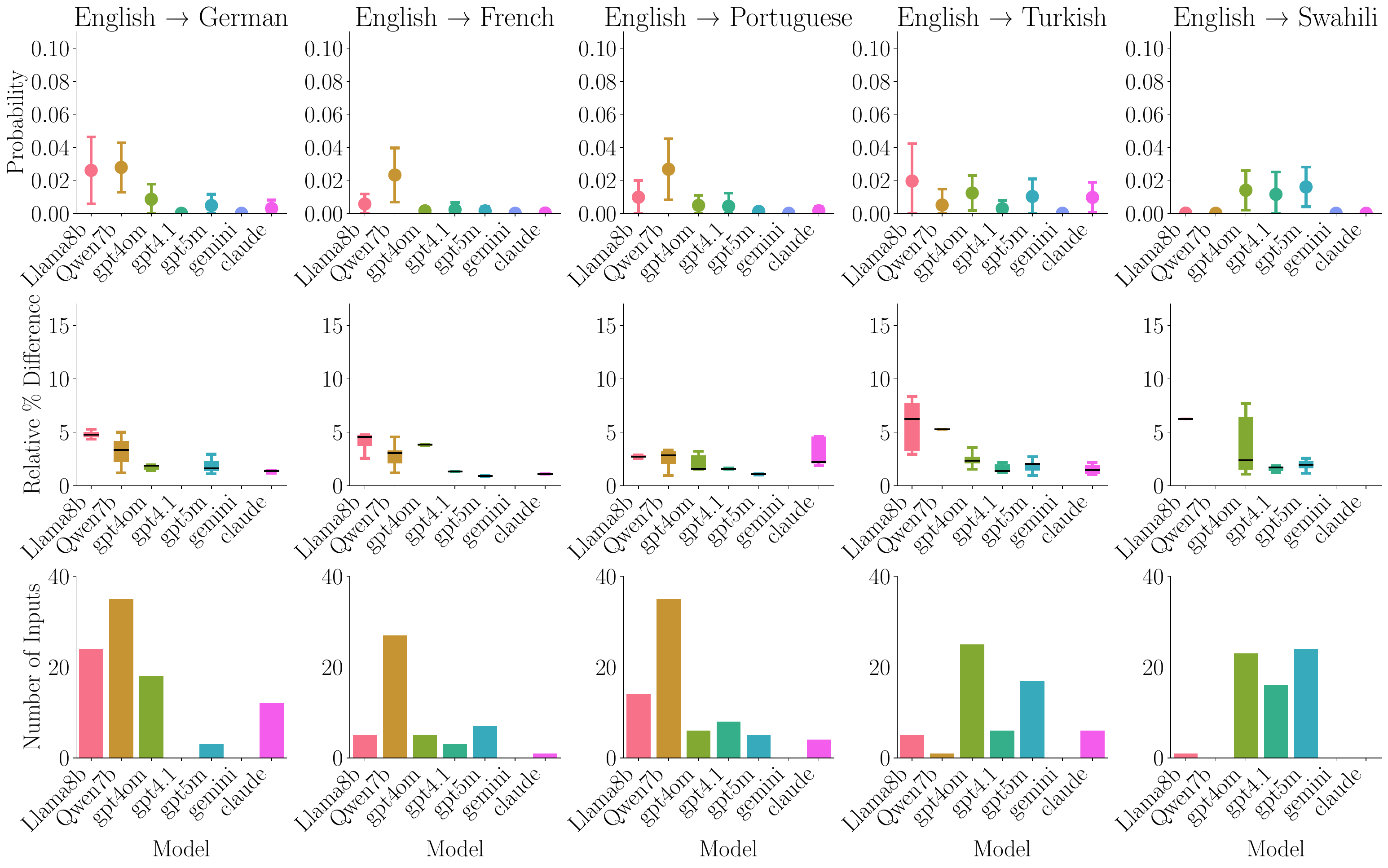}
    \caption{\textbf{Tokenization multiplicity in the translation task from English to other languages.}
    The top row shows the probability of tokenization multiplicity, the middle row shows the magnitude of price variation, and the bottom row shows the number of inputs where we observed tokenization multiplicity.
    Each column corresponds to a different target language.}
    \label{fig:translation-en-all}
\end{figure}

\begin{figure}
    \centering
    \includegraphics[width=0.84\linewidth]{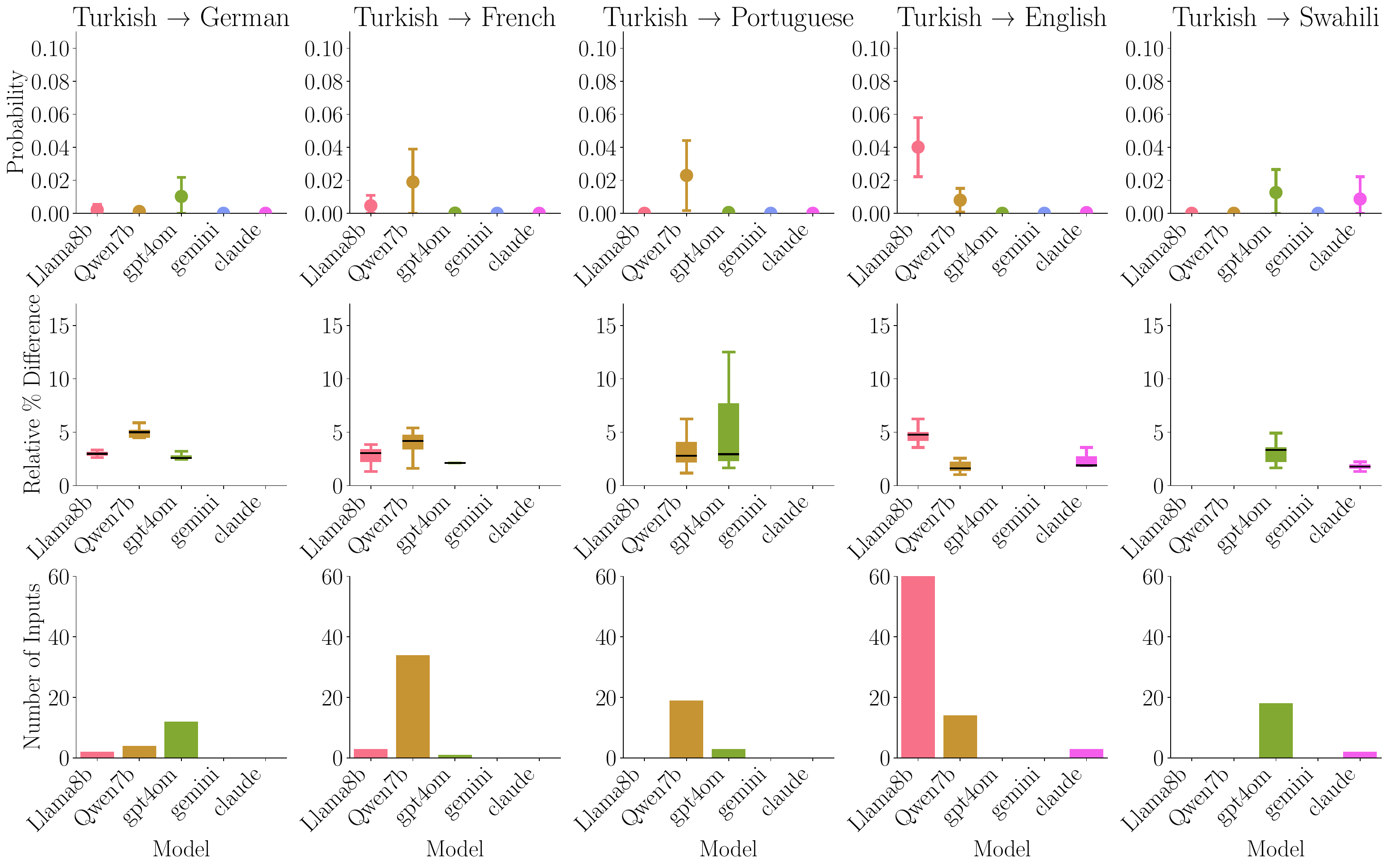}
    \caption{\textbf{Tokenization multiplicity in the translation task from Turkish to other languages.}
    The top row shows the probability of tokenization multiplicity, the middle row shows the magnitude of price variation, and the bottom row shows the number of inputs where we observed tokenization multiplicity.
    Each column corresponds to a different target language.}
    \label{fig:translation-tr-all}
\end{figure}
\vspace{-3mm}
\begin{figure}
    \centering
    \includegraphics[width=0.84\linewidth]{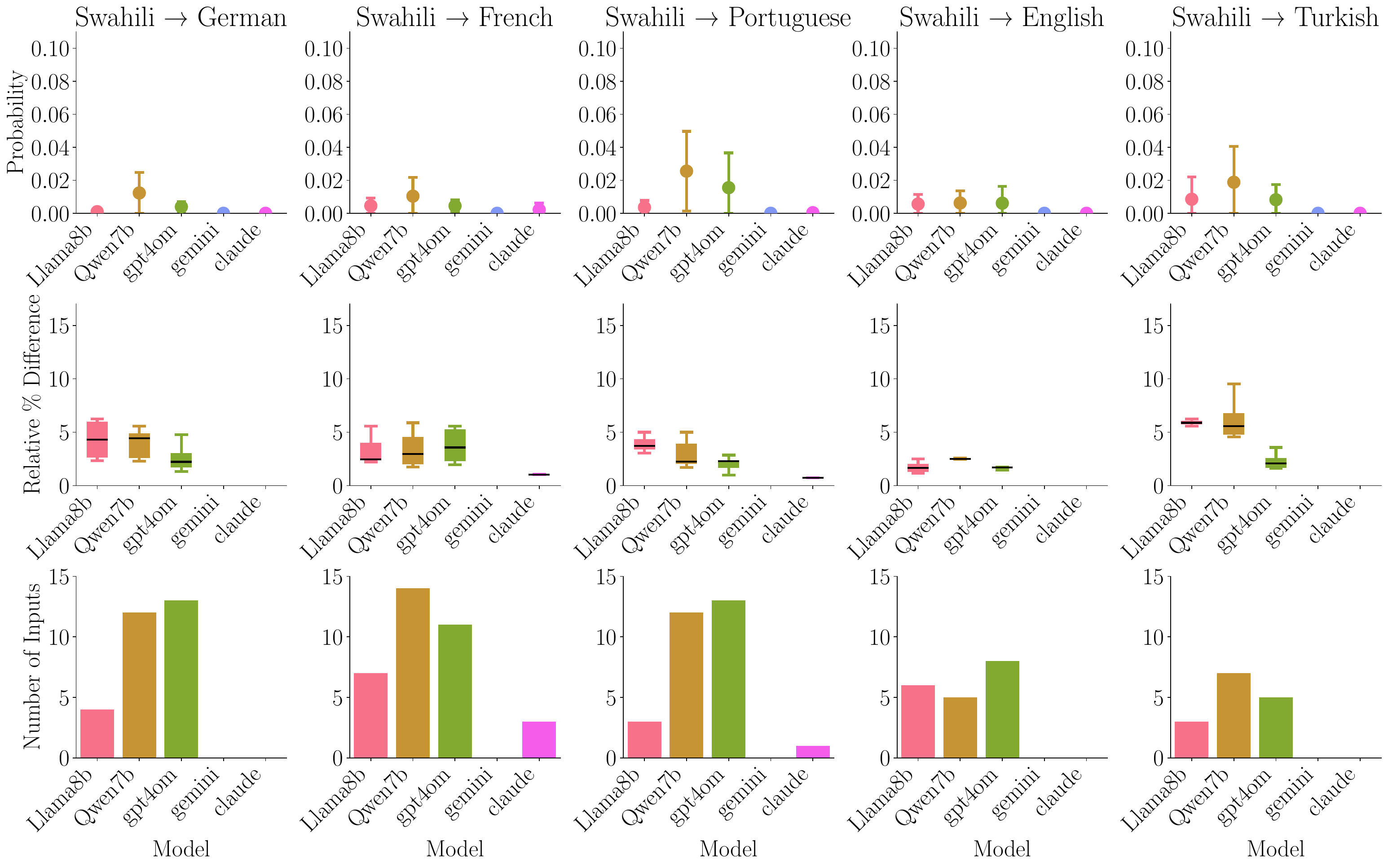}
    \caption{\textbf{Tokenization multiplicity in the translation task from Swahili to other languages.}
    The top row shows the probability of tokenization multiplicity, the middle row shows the magnitude of price variation, and the bottom row shows the number of inputs where we observed tokenization multiplicity.
    Each column corresponds to a different target language.}
    \label{fig:translation-sw-all}
\end{figure}

\clearpage
\subsection{Additional Experimental Results on Canonical Generation}
\label{app:results-canonical}
\vspace{-3mm}

\begin{table}[h!]
    \centering
    \small
    \begin{tabular}{l l c c c c}
    \toprule
    Language & Metric 
    & \multicolumn{2}{c}{Llama8B} 
    & \multicolumn{2}{c}{Qwen7B} \\
    \cmidrule(lr){3-4} \cmidrule(lr){5-6}
     &  & Standard & Canonical & Standard & Canonical \\
    \midrule
    \multirow{3}{*}{German}
        & Quality score
        & $0.72 \pm 0.02$ & $0.70 \pm 0.02$
        & $0.73 \pm 0.01$ & $0.71 \pm 0.01$ \\
        & Time per token (s)
        & $0.019$ & $0.020$ & $0.018$ & $0.019$ \\
        & Non-canonicity rate
        & $6.1\%$ & - & $17.6\%$ & - \\
    \midrule
    \multirow{3}{*}{French}
        & Quality score
        & $0.76 \pm 0.02$ & $0.74 \pm 0.02$
        & $0.78 \pm 0.02$ & $0.76 \pm 0.02$ \\
        & Time per token (s)
        & $0.020$ & $0.020$ & $0.018$ & $0.018$ \\
        & Non-canonicity rate
        & $3.5\%$ & - & $8.6\%$ & - \\
    \midrule
    \multirow{3}{*}{Portuguese}
        & Quality score
        & $0.72 \pm 0.03$ & $0.70 \pm 0.03$
        & $0.78 \pm 0.02$ & $0.76 \pm 0.02$ \\
        & Time per token (s)
        & $0.020$ & $0.020$ & $0.018$ & $0.018$ \\
        & Non-canonicity rate
        & $3.9\%$ & - & $10.0\%$ & - \\
    \midrule
    \multirow{3}{*}{Turkish}
        & Quality score
        & $0.57 \pm 0.02$ & $0.57 \pm 0.02$
        & $0.61 \pm 0.02$ & $0.60 \pm 0.01$ \\
        & Time per token (s)
        & $0.020$ & $0.020$ & $0.018$ & $0.019$ \\
        & Non-canonicity rate
        & $27.6\%$ & - & $44.4\%$ & - \\
    \midrule
    \multirow{3}{*}{Swahili}
        & Quality score
        & $0.60 \pm 0.01$ & $0.59 \pm 0.01$
        & $0.43 \pm 0.01$ & $0.43 \pm 0.01$ \\
        & Time per token (s)
        & $0.022$ & $0.020$ & $0.018$ & $0.019$ \\
        & Non-canonicity rate
        & $18.6\%$ & - & $37.2\%$ & - \\
    \bottomrule
    \end{tabular}
    \caption{{\bf Performance, (generation) time per token, and non-canonicity rate on the translation task.}
    The results comprise pairs of outputs generated with standard and canonical generation in all languages under the same source of randomness.
    For the time per token, confidence intervals were all smaller than $10^{-4}$.
    }
    \label{tab:eval-translate}
\end{table}

\vspace{-3mm}

\begin{table}[h!]
    \centering
    
    \small
    \begin{tabular}{l l c c c c}
    \toprule
    Language & Metric 
    & \multicolumn{2}{c}{Llama8B} 
    & \multicolumn{2}{c}{Qwen7B} \\
    \cmidrule(lr){3-4} \cmidrule(lr){5-6}
     &  & Standard & Canonical & Standard & Canonical \\
    \midrule
    \multirow{3}{*}{German}
        & $1$ $-$ edit distance
        & $0.62 \pm 0.04$ & $0.61 \pm 0.04$
        & $0.74 \pm 0.02$ & $0.72 \pm 0.02$ \\
        & Time per token (s)
        & $0.019$ & $0.023$ & $0.018$ & $0.018$ \\
        & Non-canonicity rate
        & $10.4\%$ & - & $19.0\%$ & - \\
    \midrule
    \multirow{3}{*}{French}
        & $1$ $-$ edit distance
        & $0.65 \pm 0.04$ & $0.64 \pm 0.04$
        & $0.77 \pm 0.02$ & $0.76 \pm 0.02$ \\
        & Time per token (s)
        & $0.019$ & $0.022$ & $0.018$ & $0.018$ \\
        & Non-canonicity rate
        & $11.8\%$ & - & $18.4\%$ & - \\
    \midrule
    \multirow{3}{*}{Portuguese}
        & $1$ $-$ edit distance
        & $0.81 \pm 0.03$ & $0.72 \pm 0.04$
        & $0.80 \pm 0.02$ & $0.76 \pm 0.02$ \\
        & Time per token (s)
        & $0.020$ & $0.022$ & $0.018$ & $0.019$ \\
        & Non-canonicity rate
        & $11.9\%$ & - & $16.8\%$ & - \\
    \midrule
    \multirow{3}{*}{English}
        & $1$ $-$ edit distance
        & $0.69 \pm 0.06$ & $0.68 \pm 0.06$
        & $0.85 \pm 0.02$ & $0.83 \pm 0.02$ \\
        & Time per token (s)
        & $0.019$ & $0.020$ & $0.018$ & $0.019$ \\
        & Non-canonicity rate
        & $5.5\%$ & - & $8.6\%$ & - \\
    \midrule
    \multirow{3}{*}{Turkish}
        & $1$ $-$ edit distance
        & $0.63 \pm 0.03$ & $0.62 \pm 0.03$
        & $0.70 \pm 0.02$ & $0.68 \pm 0.02$ \\
        & Time per token (s)
        & $0.020$ & $0.020$ & $0.020$ & $0.019$ \\
        & Non-canonicity rate
        & $26.8\%$ & - & $33.4\%$ & - \\
    \midrule
    \multirow{3}{*}{Swahili}
        &$1$ $-$ edit distance
        & $0.69 \pm 0.03$ & $0.68 \pm 0.03$
        & $0.74 \pm 0.02$ & $0.74 \pm 0.02$ \\
        & Time per token (s)
        & $0.020$ & $0.020$ & $0.021$ & $0.019$ \\
        & Non-canonicity rate
        & $17.4\%$ & - & $19.4\%$ & - \\
    \bottomrule
    \end{tabular}
    \caption{{\bf Performance, (generation) time per token, and non-canonicity rate on the spell checking task.}
    The results comprise pairs of outputs generated with standard and canonical generation in all languages under the same source of randomness.
    For the time per token, confidence intervals were all smaller than $10^{-4}$.
    }
    \label{tab:eval-typos}
\end{table}

\begin{table}[h!]
    \centering
    \small
    \begin{tabular}{l l c c c c}
    \toprule
    Language & Metric 
    & \multicolumn{2}{c}{Llama8B} 
    & \multicolumn{2}{c}{Qwen7B} \\
    \cmidrule(lr){3-4} \cmidrule(lr){5-6}
     &  & Standard & Canonical & Standard & Canonical \\
    \midrule
    \multirow{3}{*}{German}
        & Cosine similarity
        & $0.84 \pm 0.02$ & $0.84 \pm 0.02$
        & $0.89 \pm 0.02$ & $0.88 \pm 0.02$ \\
        & Time per token (s)
        & $0.020$ & $0.020$ & $0.018$ & $0.020$ \\
        & Non-canonicity rate
        & $5.7\%$ & - & $4.9\%$ & - \\
    \midrule
    \multirow{3}{*}{French}
        & Cosine similarity
        & $0.90 \pm 0.02$ & $0.90 \pm 0.02$
        & $0.93 \pm 0.02$ & $0.92 \pm 0.02$ \\
        & Time per token (s)
        & $0.019$ & $0.021$ & $0.018$ & $0.020$ \\
        & Non-canonicity rate
        & $2.2\%$ & - & $3.7\%$ & - \\
    \midrule
    \multirow{3}{*}{Portuguese}
        & Cosine similarity
        & $0.92 \pm 0.02$ & $0.91 \pm 0.02$
        & $0.96 \pm 0.01$ & $0.96 \pm 0.01$ \\
        & Time per token (s)
        & $0.020$ & $0.020$ & $0.018$ & $0.019$ \\
        & Non-canonicity rate
        & $2.2\%$ & - & $3.0\%$ & - \\
    \midrule
    \multirow{3}{*}{English}
        & Cosine similarity
        & $0.93 \pm 0.03$ & $0.90 \pm 0.04$
        & $0.96 \pm 0.02$ & $0.96 \pm 0.02$ \\
        & Time per token (s)
        & $0.019$ & $0.020$ & $0.018$ & $0.020$ \\
        & Non-canonicity rate
        & $0.2\%$ & - & $0.7\%$ & - \\
    \midrule
    \multirow{3}{*}{Turkish}
        & Cosine similarity
        & $0.87 \pm 0.01$ & $0.88 \pm 0.01$
        & $0.91 \pm 0.01$ & $0.90 \pm 0.01$ \\
        & Time per token (s)
        & $0.020$ & $0.020$ & $0.019$ & $0.019$ \\
        & Non-canonicity rate
        & $17.0\%$ & - & $17.0\%$ & - \\
    \midrule
    \multirow{3}{*}{Swahili}
        & Cosine similarity
        & $0.85 \pm 0.01$ & $0.85 \pm 0.01$
        & $0.85 \pm 0.01$ & $0.84 \pm 0.01$ \\
        & Time per token (s)
        & $0.022$ & $0.020$ & $0.018$ & $0.020$ \\
        & Non-canonicity rate
        & $10.3\%$ & - & $18.0\%$ & - \\
    \bottomrule
    \end{tabular}
    \caption{{\bf Performance, (generation) time per token, and non-canonicity rate on the rephrasing task.}
    The results comprise pairs of outputs generated with standard and canonical generation in all languages under the same source of randomness.
    For the time per token, confidence intervals are not shown, as they were all smaller than $10^{-4}$.
    }
    \label{tab:eval-rephrase}

\end{table}

\begin{table}[h!]
    \centering
    \small
    \begin{tabular}{l l c c c c}
    \toprule
    Language & Metric 
    & \multicolumn{2}{c}{Llama8B} 
    & \multicolumn{2}{c}{Qwen7B} \\
    \cmidrule(lr){3-4} \cmidrule(lr){5-6}
     &  & Standard & Canonical & Standard & Canonical \\
    \midrule
    \multirow{3}{*}{German}
        & Accuracy
        & $0.37 \pm 0.06$ & $0.37 \pm 0.06$
        & $0.63 \pm 0.05$ & $0.62 \pm 0.06$ \\
        & Time per token (s)
        & $0.020$ & $0.020$ & $0.018$ & $0.020$ \\
        & Non-canonicity rate
        & $22.3\%$ & - & $29.4\%$ & - \\
    \midrule
    \multirow{3}{*}{French}
        & Accuracy
        & $0.51 \pm 0.06$ & $0.50 \pm 0.06$
        & $0.22 \pm 0.15$ & $0.22 \pm 0.15$ \\
        & Time per token (s)
        & $0.020$ & $0.020$ & $0.019$ & $0.019$ \\
        & Non-canonicity rate
        & $13.0\%$ & - & $0.6\%$ & - \\
    \midrule
    \multirow{3}{*}{English}
        & Accuracy
        & $0.47 \pm 0.09$ & $0.47 \pm 0.09$
        & $0.86 \pm 0.12$ & $0.86 \pm 0.13$ \\
        & Time per token (s)
        & $0.020$ & $0.020$ & $0.019$ & $0.019$ \\
        & Non-canonicity rate
        & $4.2\%$ & - & $0.7\%$ & - \\
    \midrule
    \multirow{3}{*}{Spanish}
        & Accuracy
        & $0.57 \pm 0.06$ & $0.56 \pm 0.06$
        & $0.58 \pm 0.07$ & $0.58 \pm 0.07$ \\
        & Time per token (s)
        & $0.020$ & $0.020$ & $0.019$ & $0.019$ \\
        & Non-canonicity rate
        & $14.1\%$ & - & $5.2\%$ & - \\
    \midrule
    \multirow{3}{*}{Swahili}
        & Accuracy
        & $0.31 \pm 0.05$ & $0.32 \pm 0.05$
        & $0.13 \pm 0.03$ & $0.13 \pm 0.03$ \\
        & Time per token (s)
        & $0.021$ & $0.020$ & $0.018$ & $0.019$ \\
        & Non-canonicity rate
        & $32.7\%$ & - & $42.5\%$ & - \\
    \bottomrule
    \end{tabular}
\caption{{\bf Performance, (generation) time per token, and non-canonicity rate on the MGSM task.}
    The results comprise pairs of outputs generated with standard and canonical generation in all languages under the same source of randomness.
    For the time per token, confidence intervals are not shown, as they were all smaller than $10^{-4}$.
    }
    \label{tab:eval-mgsm}
    
\end{table}

\end{document}